\documentclass[sigconf]{aamas}

\usepackage{balance} %

\setcopyright{ifaamas}
\acmConference[AAMAS '24]{Proc.\@ of the 23rd International Conference
on Autonomous Agents and Multiagent Systems (AAMAS 2024)}{May 6 -- 10, 2024}
{Auckland, New Zealand}{N.~Alechina, V.~Dignum, M.~Dastani, J.S.~Sichman (eds.)}
\copyrightyear{2024}
\acmYear{2024}
\acmDOI{}
\acmPrice{}
\acmISBN{}

\usepackage{balance}

\usepackage{algorithm}
\usepackage{algorithmic}

\usepackage{xcolor}
\usepackage{subfigure}
\usepackage{svg}
\usepackage{mathtools}
\usepackage{adjustbox}

\usepackage{dsfont}
\usepackage{float}
\theoremstyle{plain}
\newtheorem{theorem}{Theorem}[section]
\newtheorem*{theorem*}{Theorem}

\newtheorem{lemma}[theorem]{Lemma}

\theoremstyle{definition}
\newtheorem{definition}[theorem]{Definition}

\theoremstyle{remark}

\usepackage{svg}

\acmSubmissionID{fp0488}

\title[MESA]{MESA: Cooperative Meta-Exploration in Multi-Agent Learning through Exploiting State-Action Space Structure}

\author{Zhicheng Zhang}
\affiliation{
  \institution{Carnegie Mellon University}
  \city{Pittsburgh, Pennsylvania}
  \country{United States}}
\email{zhichen3@cs.cmu.edu}
\authornote{Equal contribution.}

\author{Yancheng Liang}
\affiliation{
  \institution{University of Washington}
  \city{Seattle, Washington}
  \country{United States}}
\email{yancheng@cs.washington.edu}
\authornotemark[1]

\author{Yi Wu}
\affiliation{
  \institution{Tsinghua University}
  \city{Beijing}
  \country{China}}
\email{jxwuyi@gmail.com}

\author{Fei Fang}
\affiliation{
  \institution{Carnegie Mellon University}
  \city{Pittsburgh, Pennsylvania}
  \country{United States}}
\email{feif@cs.cmu.edu}

\begin{abstract}
  Multi-agent reinforcement learning (MARL) algorithms often struggle to find strategies close to Pareto optimal Nash Equilibrium, owing largely to the lack of efficient exploration. The problem is exacerbated in sparse-reward settings, caused by the larger variance exhibited in policy learning. This paper introduces MESA, a novel meta-exploration method for cooperative multi-agent learning. It learns to explore by first identifying the agents' high-rewarding joint state-action subspace from training tasks and then learning a set of diverse exploration policies to ``cover'' the subspace. These trained exploration policies can be integrated with any off-policy MARL algorithm for {test-time} tasks. We first showcase MESA's advantage in a multi-step matrix game. Furthermore, experiments show that with learned exploration policies, MESA achieves significantly better performance in sparse-reward tasks in several multi-agent particle environments and multi-agent MuJoCo environments, and exhibits the ability to generalize to {more challenging tasks at test time.}
\end{abstract}

\keywords{Multi-Agent Reinforcement Learning; Meta-Learning;  Exploration Strategy}

\newcommand{\BibTeX}{\rm B\kern-.05em{\sc i\kern-.025em b}\kern-.08em\TeX}

\makeatletter
\makeatother

\begin{document}

\pagestyle{fancy}
\fancyhead{}

\maketitle 

\section{Introduction}

Reinforcement learning (RL) algorithms often adopt a trial-and-error learning paradigm and optimize the policy based on the reward signals given by the environment. %
{The effectiveness of RL relies on efficient exploration, especially in sparse reward settings, as it is critical to get sufficient experiences with high rewards to guide the training.}

The exploration challenge has been studied extensively and existing works can be categorized mainly into two streams. One core idea with great success is to incentivize the agent to visit the under-explored states more frequently by adding an intrinsic reward based on a visitation measure \cite{bellemare2016unifying,pathak2017curiosity,ostrovski2017count,tang2017exploration} or some other heuristics~\cite{influence3:hughes2018inequity,influenceexploration}. 

However, in multi-agent settings, due to the exponential growth of the joint state-action space, simply visiting more novel states can be increasingly ineffective. Exploration policies need to better capture the low-dimensional structure of the tasks and leverage the structural knowledge for higher exploration efficiency.

\begin{figure}[!t]
    \centering
    \adjustbox{trim=0cm 0cm 0.5cm 0cm}{%
        \includegraphics[scale=0.093]{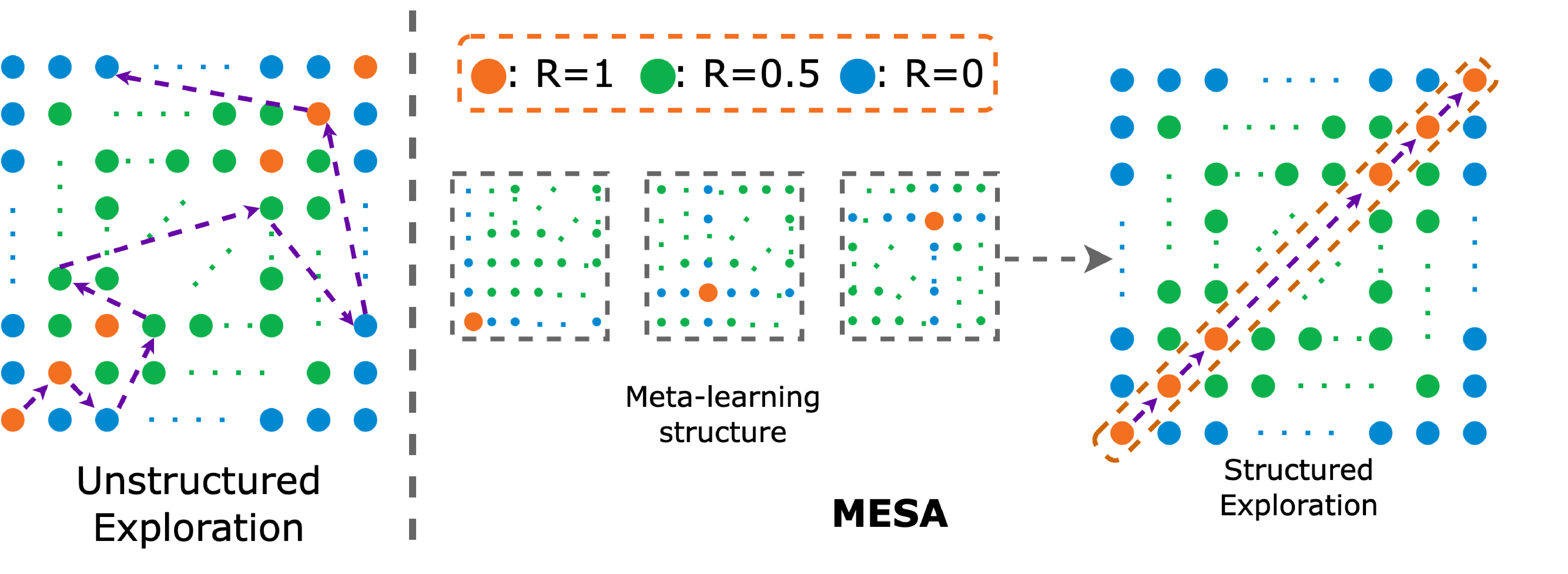}
    }
    \caption{Illustration of structured exploration and unstructured exploration behavior in the $2$-player climb game. The rows and columns indicate the players' action space. %
    While unstructured exploration aims to visit novel states, structured exploration exploits structures in the joint state-action space, helping agents coordinatedly and more efficiently explore the potential high-reward subspace.
    }
    \Description{Illustration of structured exploration and unstructured exploration behavior in the $2$-player climb game. The rows and columns indicate the players' action space. While unstructured exploration aims to visit novel states, structured exploration exploits structures in the joint state-action space, helping agents coordinatedly and more efficiently explore the potential high-reward subspace.}
    \label{fig:intro_illustrate}
\end{figure}

{Another line of work specifically learns exploration strategies. However, these works do not explicitly consider the underlying task structure.
For example, \citeauthor{MAVEN} conditions the policy on a shared latent variable~\cite{MAVEN} learned via mutual information maximization. 
\citeauthor{cooperativeexploration} adopts a goal-conditioned exploration strategy by setting state features as goals~\cite{cooperativeexploration}. 
Other works in the single-agent settings~\cite{GEP-PG,parisi2021interestingdeepak,schafer2022decoupled} learn exploration policies through a pre-defined intrinsic reward.
All these works train the exploration policy using task-agnostic exploration-specific rewards.}
{In Section~\ref{sec:example-game}, we will present a simple matrix game to show that popular exploration methods can have difficulties finding the optimal solution due to the reward structure of the game.}

How can we enable the agents to more effectively explore by leveraging the intrinsic structure of the environment?
We adopt a meta-exploration framework (i.e., learning to explore) for MARL: we first train multiple \emph{structured exploration policies} from a set of training tasks (referred to as the meta-training stage), {and then use these exploration policies to facilitate agents' learning in a test-time task, which is typically a new task sampled from the task distribution (referred to as meta-testing stage).}
We develop a multi-agent meta-exploration method,
\textit{Cooperative \textbf{M}eta-\textbf{E}xploration in \textbf{M}ulti-\textbf{A}gent Learning through Exploiting State-Action Space Structure} (MESA) for fully cooperative settings.
MESA leverages the task structures by explicitly identifying the agents' high-rewarding joint state-action subspace in the training tasks. It then trains a set of diverse exploration policies to cover this identified subspace. The exploration policies are trained with a reward scheme induced by the distance to the high-rewarding subspace. The meta-learned exploration policies can be combined with any off-policy MARL algorithm during the meta-testing stage by randomly selecting learned exploration policies to collect valuable experiences. Such structured exploration can help the agents to learn good joint policies efficiently (Figure \ref{fig:intro_illustrate}).
We empirically show the success of MESA on the matrix climb game and its harder multi-stage variant. In addition, we evaluate MESA in {two continuous control tasks, i.e., the MPE environment~\cite{MADDPG} and the multi-agent MuJoCo benchmark~\cite{peng2021facmac}. }%
We demonstrate the superior performance of MESA compared to existing multi-agent learning and exploration algorithms. 
{Furthermore, we show that MESA is capable of generalizing to unseen test-time tasks that are more challenging than any of the training tasks.} %

\section{Related Work}

Exploration has been a long-standing challenge in RL with remarkable progress achieved in the single-agent setting~\cite{bellemare2016unifying,pathak2017curiosity,ostrovski2017count,tang2017exploration,RND,riedmiller2018learning,ecoffet2021first}. Most of these works maintain pseudo-counts over states and construct intrinsic rewards to encourage the agents to visit rarely visited states more frequently~\cite{bellemare2016unifying,pathak2017curiosity,ostrovski2017count,tang2017exploration}.
These count-based methods have been extended to the multi-agent setting by incentivizing intra-agent interactions or social influence~\cite{influence2:jaques2018intrinsic,influence3:hughes2018inequity,influence1:jaques2019social,influenceexploration}. However, in the multi-agent setting, a simple count-based method can be less effective due to the partial observability of each agent, an exponentially large joint state-action space, and the existence of multiple non-Pareto-optimal NE. Therefore, recent works focus on discovering the structures of possible multi-agent behaviors. For example, \cite{MAVEN} adopts variational inference to learning structured latent-space-policies; \cite{UNEVEN} generates similar tasks with simpler reward functions to promote cooperation; \cite{cooperativeexploration} learns to select a subset of state dimensions for efficient exploration.
We follow a meta-learning framework and learn structured exploration strategies by exploiting high-rewarding subspace in the joint state-action space.
Our method also leverages a count-based technique as a subroutine during the meta-training phase to prevent over-exploitation and mode collapse.

Meta reinforcement learning (meta-RL) is a popular RL paradigm that focuses on training a policy that can quickly adapt on an unseen task at test time~\cite{RL2:duan2016rl,MAML,MAESN,xu2018learning,lan2019meta,rakelly2019efficient,varibad,MetaCure}.
Such a paradigm has been extended to the setting of learning to explore. The key idea is to meta-learn a separate exploration policy that can be used in the testing task.
Most closely related to our work is \cite{parisi2021interestingdeepak}, where an exploration policy is pretrained on a set of training tasks. However, their method is designed for the single-agent setting and learns the exploration policy by using a task-agnostic intrinsic reward to incentivize visitation of interesting states
, while we directly utilize the task reward to learn the structure of the environments. Other existing works in meta-exploration propose to learn a latent-space exploration policy that is conditioned on a task variable, which can be accomplished by meta-policy gradient~\cite{xu2018learning,MAESN,lan2019meta}, variational inference~\cite{rakelly2019efficient} or information maximization~\cite{MetaCure} over the training tasks. Therefore, at test time, posterior inference can be performed for the latent variable towards fast exploration strategy adaption. 
Our approach follows a similar meta-exploration paradigm by learning additional exploration policies. However, existing meta-exploration methods focus on the single-agent setting while we consider much more challenging multi-agent games
{with a distribution of similarly-structured tasks, for example, the MPE environment~\cite{MADDPG} with a distribution of target landmarks that the agents need to reach.}
In addition, we meta-learn a discrete set of exploration policies through an iterative process, which results in a much simpler meta-testing phase without the need for posterior sampling or gradient updates on exploration policies. 
Besides, some other methods pretrain exploration policies from an offline dataset~\cite{singh2020parrot,offline-explore-2:dorfman2020offline,offline-explore-1:pong2021offline}, which is beyond the scope of this paper.

Finally, our approach largely differs from the setting of multi-task learning~\cite{mtrl-distill-1:parisotto2016actor,mtrl-HER-1:andrychowicz2017hindsight,mtrl-hier-2:andreas2017modular,mtrl-direct-2:espeholt2018impala,mtrl-direct-1:hessel2019multi}{, which are commonly evaluated in environments with heterogeneous tasks or scenarios}.
Our exploration policies are \emph{not} trained to achieve high returns in the training tasks. Instead, they are trained to reach as many high-reward \emph{state-action pairs} as possible collected in a diverse set of tasks. Therefore, the state-action pairs covered by a single exploration policy are very likely to be distributed across different training tasks.

\section{Preliminaries}

\textbf{Dec-POMDP.} We consider fully-cooperative Markov games described by a decentralized partially observable Markov decision process (Dec-POMDP), which is defined by $\left\langle \mathcal{S}, \mathcal{A}, P, R, \Omega, \mathcal{O},n,\gamma \right\rangle$. $\mathcal{S}$ is the state space. $\mathcal{A} \equiv \mathcal{A}_1 \times ... \times \mathcal{A}_n$ is the joint action space. The dynamics is defined by the transition function $P(s'\mid s,\boldsymbol{a})$. Agents share a reward function $R(s,\boldsymbol{a})$, and $\gamma \in (0,1)$ is the discount factor. $\Omega \equiv \Omega_1 \times .. \times \Omega_n$ is the joint observation space, where $\Omega_i$ is the observation space for agent $i$. At each timestep, each agent $i$ only has access to its own observation $o_i \in \Omega_i$ defined by the function $\mathcal{O}: \mathcal{S}\times\mathcal{A}\mapsto\Omega$. The goal of agents in Dec-POMDP is to maximize the common expected discounted return under the joint policy $\boldsymbol{\pi}$: $\mathcal{J}(\boldsymbol{\pi}) =  \mathbb{E}_{\boldsymbol{\pi}}\left[\sum_{t} \gamma^t R(s_t,\boldsymbol{a}_t)\right]$.

\textbf{Learning to Explore.}
Meta-RL assumes a task distribution $p(\mathcal{T})$ over tasks, and an agent aims to learn to quickly adapt to a test-time task $\mathcal T_\text{test}$ drawn from $p(\mathcal{T})$ after training in a batch of training tasks $\{\mathcal{T}_i\mid\mathcal{T}_i \sim p(\mathcal{T})\}_{i=1}^B$.
Inspired by the explicit exploration methods \cite{GEP-PG,MetaCure}, we adopt a meta-exploration framework for MARL: we learn joint exploration policies $\boldsymbol{\pi}_e$ from training tasks $\{\mathcal{T}_i\mid\mathcal{T}_i \sim p(\mathcal{T})\}_{i=1}^B$ and use $\boldsymbol{\pi}_e$ to collect experiences for the training of the agents' policy profile $\boldsymbol{\pi}$ in task $\mathcal{T}_\text{test}$, denoted as $\boldsymbol{\pi}(\boldsymbol{\pi}_e, \mathcal{T}_\text{test})$. Formally, the objective of meta-exploration is
\begin{equation}
    \max_{\boldsymbol{\pi}_e} \mathbb{E}_{\mathcal{T}_\text{test}\sim p(\mathcal{T})}\left[  \mathbb{E}_{\boldsymbol{\pi}(\boldsymbol{\pi}_e, \mathcal{T}_\text{test})} \left[\sum_{t} \gamma^t R_i(s_t,\boldsymbol{a}_t)\right] \right].
\end{equation}

\textbf{Nash Equilibrium and Pareto Optimality.}
A joint policy $\boldsymbol{\pi}$ is an NE if each agent's policy $\pi_i$ is a best response to the other agents' policies $\boldsymbol{\pi}_{-i}$. That is, for any agent $i$'s alternative policy $\pi'_i$, we have $Q_i(\boldsymbol{\pi}) \geq Q_i(\pi'_i,\boldsymbol{\pi}_{-i})$, where $Q_i$ is the value function for agent $i$. 
A joint policy $\boldsymbol{\pi}$ is Pareto optimal if there does not exist an alternative joint policy $\boldsymbol{\pi}'$
such that $\forall i,\ Q_i(\boldsymbol{\pi}') \geq Q_i(\boldsymbol{\pi})$ and $\exists i,\ Q_i(\boldsymbol{\pi}') > Q_i(\boldsymbol{\pi})$. %

\section{A Motivating Example: Climb Game}
\label{sec:example-game}

\begin{figure*}[ht]
    \centering
    \includegraphics[scale=0.45]{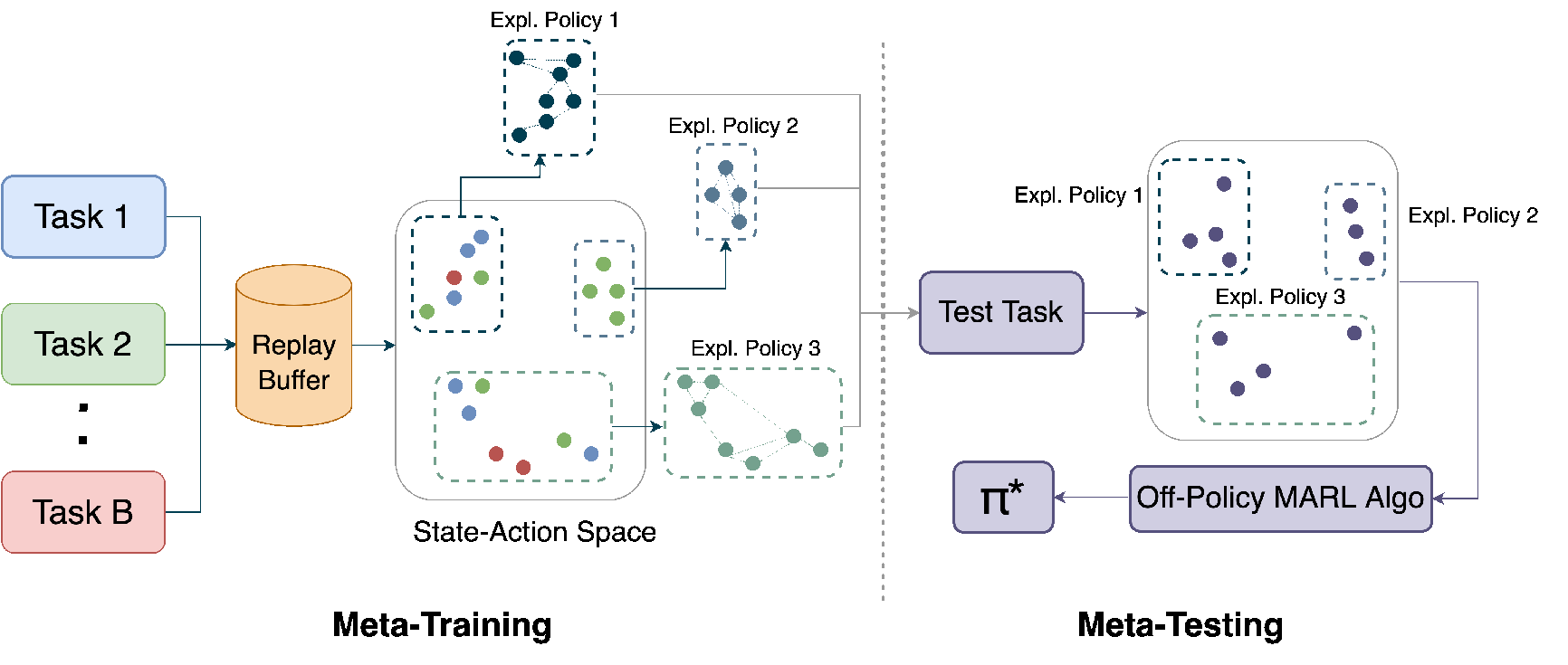}
    \caption{MESA's meta-learning framework. In the meta-training stage, MESA learns exploration policies to cover the high-rewarding subspace. In the meta-testing stage, MESA uses the learned exploration policies to assist the learning in an unseen task. Each color corresponds to a different task, and the colored points represent the high-rewarding joint state-action pairs collected in that task.
    }
    \Description{MESA's meta-learning framework. In the meta-training stage, MESA learns exploration policies to cover the high-rewarding subspace. In the meta-testing stage, MESA uses the learned exploration policies to assist the learning in an unseen task. Each color corresponds to a different task, and the colored points represent the high-rewarding joint state-action pairs collected in that task.}
    \label{fig:pipeline}
\end{figure*}

We analyze a fully cooperative matrix game known as Climb Game. In Section~\ref{section:thereom-challenge}, we show how popular exploration strategies, including unstructured strategies like uniform exploration and task-specific strategies like $\epsilon-$greedy, fail to efficiently explore the climb game. By contrast, we show in Section~\ref{section:thereom-structure} that a simple structured exploration strategy can substantially improve the exploration efficiency.

A climb game $G_f(n, u, U)$ is a $n$-player game with action space %
$\mathcal{A}_i =\{0,\dots,U-1\}$ for any player $i$. The reward of a joint action $\boldsymbol{a}\in \mathcal A$ is determined by the number of players performing a specific action $u$ (denoted as \#$u$), which is
\begin{align}
    R(\boldsymbol{a}) = 
    \begin{cases}
        1,&\text{if \#}u = n,\\
        1 - \delta \text{ } (0 < \delta < 1), &\text{if \#}u = 0,\\
        0,& \text{otherwise}.
    \end{cases}.
\end{align}

\subsection{Exploration Challenge}
\label{section:thereom-challenge}

A climb game $G_f(n,u,U)$ has three groups of NE: the Pareto optimal NE $(u,u,\dots,u)$, the sub-optimal NEs $\{(a_1,a_2,\dots,a_n)\mid \forall i, \ a_i \neq u\}$, and the zero-reward NEs $\{(a_1,a_2,\dots,a_n) \mid 1 < \# u < n\}$. The sheer difference in the size of the three subsets of NEs makes it particularly challenging for RL agents to learn the optimal policy profile without sufficient exploration, as evidenced by the theoretical analysis below and empirical evaluation in Section~\ref{experiments}.

Consider a 2-agent climb game $G_f(2,0,U)$. A joint action $\boldsymbol{a}$ can be represented by a pair of one-hot vectors $[\mathbf{e}_i, \mathbf{e}_j] \in \{0,1\}^{2U}$.
Let $q(\mathbf{x}, \mathbf{y}; \theta)$ be a joint Q function parameterized by $\theta$ that takes input $\mathbf{x}, \mathbf{y}\in \{0,1\}^U$ and is learned to approximate the reward of the game. We hope the joint Q function has the same optimal policy profile. %

\begin{definition}
We call a joint $Q$ function $q(\mathbf{x}, \mathbf{y}; \theta)$ \emph{equivalently optimal} when $q(\mathbf{e}_0, \mathbf{e}_0;\theta) = \max_{0\le i,j< U} q(\mathbf{e}_i, \mathbf{e}_j;\theta)$. When a joint $Q$ function is equivalently optimal, {one can use it to find the optimal policy.}%
\end{definition}
Since neural networks are difficult to analyze in general \cite{blum1988training}, we parameterize the joint $Q$ function in a quadratic form:
\begin{equation}
q(\mathbf{x}, \mathbf{y};\mathbf{W}, \mathbf{b}, \mathbf{c}, d) = \mathbf{x}^\top \mathbf{W} \mathbf{y} + \mathbf{b}^\top \mathbf{x} + \mathbf{c}^\top \mathbf{y} + d
\end{equation}
A Gaussian prior $p(\mathbf{W})=\mathcal N(\mathbf{W};0, \sigma^2_w I)$ is introduced under the assumption that
a non-linear $\mathbf{W}$ is harder and slower to learn.
Quadratic functions have been used in RL \cite{gu2016continuous,wang2019quadratic} as a replacement for the commonly-used multi-layer perceptron, and there are also theoretical results \cite{du2018power} analyzing neural networks with quadratic activation. For the climb game, it is easy to verify that the quadratic coefficients make the joint $Q$ function sufficiently expressive to perfectly fit the reward function {by setting $\mathbf{W}$ to be the reward matrix}. Therefore, the learning process of $Q$ is mainly affected by how the exploration policy samples the data.

Consider an exploration policy $p_e^{(t)}$ that selects joint action $\boldsymbol a=(i,j)$ at step $t$ with probability $p_e^{(t)}(i,j)$. 
The efficiency of an exploration policy can be measured by the required number of steps for learning an equivalently optimal $Q$ function using the maximum likelihood estimator over the data sampled from $p_e^{(t)}$.
{The learning objective includes both the prior $p(\mathbf{W})$ and the likelihood of prediction error $p(E_{ij})$, where the prediction error $E_{ij}=q(\mathbf{e}_i, \mathbf{e}_j;\cdot)-R_{ij}$. If the prediction error is assumed to be depicted by a Gaussian distribution $p(E_{ij})=\mathcal N(E_{ij};0,\sigma_e^2)$ for every visited joint action $(i,j)$,
then the learning objective for the $Q$ function can be formulated as:}
\begin{align} & \mathcal J^{(T)}(\mathbf{W}, \mathbf{b}, \mathbf{c}, d) \notag \\
=& \mathbb E_{\{(i^{(t)},j^{(t)})\sim p_e^{(t)}\}_{t=1}^T} \log \left( p(\textbf{W})\prod_{t'=1}^{T}p(E_{i^{(t)}j^{(t)}}) \right) \notag \\
=& \sum_{t=1}^T \mathbb E_{(i,j)\sim p_e^{(t)}} \left[ \log \mathcal N(q(\mathbf{e}_i, \mathbf{e}_j;\mathbf{W}, \mathbf{b}, \mathbf{c}, d)-R_{ij}; 0,\sigma^2_e) \right]  \notag \\
+& \log \mathcal N(\mathbf{W};0, \sigma^2_w I) + \text{Const.}
\end{align}

We use $q_{\mathcal J^{(T)}}(\mathbf{W}, \mathbf{b}, \mathbf{c}, d)$ to denote the learned joint $Q$ function that maximizes $\mathcal J^{(T)}$ at step $T$. $q_{\mathcal J^{(T)}}(\mathbf{W}, \mathbf{b}, \mathbf{c}, d)$ is determined by the exploration policy $p_e^{(t)}$ and the exploration steps $T$. Then we have the following theorem for the uniform exploration strategy.

\begin{theorem}[uniform exploration]
\label{theorem:uniform}
Assume $\delta\le \frac{1}{6}, U \ge 3$. Using a uniform exploration policy in the climb game $G_f(2,0,U)$, it can be proved that $q_{\mathcal J^{(T)}}(\mathbf{W}, \mathbf{b}, \mathbf{c}, d)$ will become equivalently optimal only after $T=\Omega(|\mathcal A|\delta^{-1})$ steps.
When $\delta=1$, $T=O(1)$ steps suffice to learn the equivalently optimal joint Q function, suggesting the inefficiency of uniform exploration is due to a large set of sub-optimal NEs.
\end{theorem}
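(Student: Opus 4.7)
The plan is to reduce $\max \mathcal{J}^{(T)}$ to a standard ridge regression, use the symmetry of $G_f(2,0,U)$ to obtain a closed-form solution, and then read off the step budget required for equivalent optimality. Since $p_e^{(t)}(i,j) = 1/U^2$ is constant in both $t$ and $(i,j)$, the expectation over trajectories collapses and, up to an additive constant, $\mathcal{J}^{(T)} = -\tfrac{T}{2\sigma_e^2 U^2}\sum_{i,j}(W_{ij}+b_i+c_j+d-R_{ij})^2 - \tfrac{1}{2\sigma_w^2}\|\mathbf{W}\|_F^2$. Setting $\lambda = \sigma_e^2 U^2/(T\sigma_w^2)$, the inner minimization over each $W_{ij}$ is a one-variable quadratic with optimizer $W_{ij}^{*} = (R_{ij}-b_i-c_j-d)/(1+\lambda)$. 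Plugging this back, the residual and penalty combine to $\tfrac{\lambda}{1+\lambda}\sum_{i,j}(R_{ij}-b_i-c_j-d)^2$, so $(\mathbf{b}^{*},\mathbf{c}^{*},d^{*})$ is simply the best additive (rank-one) fit of the reward matrix $R$.

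I would then exploit two symmetries of the climb game: player symmetry ($R_{ij}=R_{ji}$) forces $\mathbf{b}^{*} = \mathbf{c}^{*}$, and the invariance of $R$ under permutations of the non-zero actions forces $b^{*}_i = \beta_1$ for all $i \neq 0$. The additive parameterization has a one-parameter gauge $(\mathbf{b},d) \mapsto (\mathbf{b}+\alpha\mathbf{1},d-\alpha)$, which I kill by setting $d^{*}=0$, leaving only $(\beta_0,\beta_1)$ with $\beta_0 = b^{*}_0$. The residual sum decomposes over three cell types — the single $(0,0)$ entry with $R=1$, the $2(U-1)$ edge entries with $R=0$, and the $(U-1)^2$ interior entries with $R=1-\delta$ — and the two resulting first-order conditions give, after a short calculation, the clean identity $\beta_1 - \beta_0 = ((U-2)-\delta(U-1))/U$, which is strictly positive whenever $\delta \le 1/6$ and $U \ge 3$.

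The learned Q-values then satisfy $q(\mathbf{e}_i,\mathbf{e}_j) = (R_{ij}+\lambda(b^{*}_i+c^{*}_j))/(1+\lambda)$, so equivalent optimality is equivalent to $q(\mathbf{e}_0,\mathbf{e}_0) \ge q(\mathbf{e}_i,\mathbf{e}_j)$ over the two other orbit representatives. The comparison against interior cells ($i,j\neq 0$) yields the binding condition $\delta \ge 2\lambda(\beta_1-\beta_0)$; the comparison against edge cells yields $1 \ge \lambda(\beta_1-\beta_0)$, which is strictly looser under $\delta \le 1/6$. Substituting $\lambda = \sigma_e^2 U^2/(T\sigma_w^2)$ and the expression for $\beta_1-\beta_0$, equivalent optimality requires
\begin{equation}
T \;\ge\; \frac{2\sigma_e^2\, U\,\bigl[(U-2)-\delta(U-1)\bigr]}{\sigma_w^2\, \delta} \;=\; \Omega(U^2/\delta) \;=\; \Omega(|\mathcal{A}|\delta^{-1}).
\end{equation}
For the $\delta=1$ case, the identity gives $\beta_1-\beta_0 = -1/U < 0$, so the binding condition is vacuously satisfied at every $T$ and already $T=O(1)$ steps suffice, confirming that the $\delta^{-1}$ blow-up above is driven by the dense set of sub-optimal NE with near-maximal reward.

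The main obstacle is justifying the symmetry reduction rigorously: one must argue that the symmetric stationary point is the unique global minimizer (rather than a saddle) and that fixing the gauge $d^{*}=0$ does not affect the identified quantities $W^{*}$ and $\beta_1-\beta_0$. Both follow from strict convexity of the reduced quadratic on the quotient by the gauge, but this needs to be stated carefully. The accompanying $2\times 2$ linear solve for $(\beta_0,\beta_1)$ is routine but requires careful bookkeeping to produce the clean identity above; every other step is a standard first-order condition or a straightforward substitution.
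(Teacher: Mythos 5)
Your proposal is correct, and I verified that your final threshold condition is algebraically identical to the paper's: with $\lambda_{\mathrm{s}}=\sigma_e^2U^2/(T\sigma_w^2)$ and $m=U-1$, your condition $\delta\ge 2\lambda_{\mathrm{s}}(\beta_1-\beta_0)$ rearranges to $\delta\ge\frac{\lambda_{\mathrm{s}}}{1+\lambda_{\mathrm{s}}}\cdot\frac{(m-1)(2-\delta)}{m+1}$, which is exactly the paper's criterion (Lemma A.1) specialized to $f_0=1/U^2$, $f_1=2m/U^2$, $f_2=m^2/U^2$. The routes differ in organization rather than substance. The paper first proves a general lemma: it parameterizes the symmetric optimum by $(W_0,W_1,W_2,B,C,D)$, writes the objective in terms of the orbit frequencies $f_0,f_1,f_2$ of an \emph{arbitrary} exploration policy, and solves the coupled first-order conditions via the substitution $K_0,K_1,K_2$ to get a criterion reused verbatim for Theorem 2 ($\epsilon$-greedy). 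You instead exploit that uniform exploration makes the MAP problem a plain ridge regression: eliminate each $W_{ij}$ in closed form, reduce to the best additive fit of $R$, and solve a $2\times2$ system for $(\beta_0,\beta_1)$. Your version is more elementary and makes the mechanism transparent (the sign of $\beta_1-\beta_0$, i.e., whether the additive fit prefers the sub-optimal block, directly controls whether shrinkage of $\mathbf{W}$ hurts), and your $\delta=1$ argument via $\beta_1-\beta_0=-1/U<0$ is cleaner than the paper's swap argument; what you give up is reusability, since your per-entry elimination relies on the uniform weighting and would have to be redone for non-uniform $p_e^{(t)}$. Two minor points to tighten: state explicitly that the Q-values $q(\mathbf{e}_i,\mathbf{e}_j)$ are invariant under the gauge $(\mathbf{b},\mathbf{c},d)\mapsto(\mathbf{b}+\alpha\mathbf{1},\mathbf{c}+\alpha'\mathbf{1},d-\alpha-\alpha')$ so that symmetrizing any minimizer loses nothing (the paper waves at this via ``symmetry and concavity'' too), and note that $(U-2)-\delta(U-1)=\Omega(U)$ under $\delta\le\frac16$, $U\ge3$ so that your explicit bound $T\ge 2\sigma_e^2U\bigl[(U-2)-\delta(U-1)\bigr]/(\sigma_w^2\delta)$ really is $\Omega(U^2\delta^{-1})$.
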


The intuition behind Theorem~\ref{theorem:uniform} is that the hardness of exploration in climb games largely comes from the sparsity of solutions: a set of sub-optimal NEs exist but there is only a single Pareto optimal NE.
Learning the joint $Q$ function can be influenced by the sub-optimal NEs. And if the exploration attempts are not well coordinated, a lot of zero reward would be encountered, making it hard to find the Pareto optimal NE. We also remark that uniform exploration can be particularly inefficient since the term $|\mathcal A|$ can be exponentially large in a multi-agent system. {This indicates that more efficient exploration can potentially be achieved by reducing the search space and identifying a smaller ``critical'' subspace.}

{To formally prove Theorem~\ref{theorem:uniform}, we define $f_1, f_2, f_3$ as the step-averaged probability of taking the joint action in optimal NE, suboptimal NE and zero-reward, respectively. We show that to make the joint $Q$ function equivalently optimal, there is a necessary condition that $f_1, f_2, f_3$ should follow. When $T$ is not large enough, this condition cannot be satisfied. Detailed proof is in Appendix A.2.}
\

Next, we consider the case of another popular exploration paradigm, $\epsilon$-greedy exploration.

\begin{theorem}[$\epsilon$-greedy exploration]
\label{theorem:epsilon}
Assume $\delta\le \frac{1}{32}, U \ge 4, U\ge \sigma_w\sigma_e^{-1}$. In the climb game $G_f(2,0,U)$, under $\epsilon$-greedy exploration with fixed $\epsilon \le \frac12$, $q_{\mathcal J^{(T)}}(\mathbf{W}, \mathbf{b}, \mathbf{c}, d)$ will become equivalently optimal only after $T=\Omega(|\mathcal A|\delta^{-1}\epsilon^{-1})$ steps. If $\epsilon(t)=1/t$, it requires $T=\exp \left( \Omega\left(|\mathcal A| \delta^{-1} \right) \right)$ exploration steps to be equivalently optimal.
\end{theorem}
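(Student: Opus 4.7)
The plan is to reduce the $\epsilon$-greedy analysis to the uniform-exploration analysis of Theorem~\ref{theorem:uniform} by decomposing the behaviour policy: at each step $t$, with probability $\epsilon(t)$ the algorithm samples uniformly from $\mathcal A$, and with probability $1-\epsilon(t)$ it plays the current greedy action under $q_{\mathcal J^{(t)}}$. The step-averaged probabilities $f_1,f_2,f_3$ of visiting the Pareto-optimal NE, a sub-optimal NE, and a zero-reward cell, respectively, then split into a uniform contribution and a greedy contribution, and the necessary condition on $f_1,f_2,f_3$ derived in the proof of Theorem~\ref{theorem:uniform} still has to hold. What I would show is that the greedy contribution never helps produce visits to $(0,0)$, so the only route to satisfying that necessary condition is through the uniform component.

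The heart of the argument is a structural lemma: before $(\mathbf e_0,\mathbf e_0)$ has been sampled $\Omega(\delta^{-1})$ times, the greedy argmax of $q_{\mathcal J^{(t)}}$ cannot lie at $(0,0)$. To see why this should hold, note that the Gaussian prior on $\mathbf W$ with variance $\sigma_w^2$ shrinks any unvisited entry $W_{ij}$ toward $0$, while the linear terms $\mathbf b,\mathbf c,d$ are constrained by the visited data. Under the assumption $U\ge \sigma_w\sigma_e^{-1}$ together with $\delta\le 1/32$, the posterior mode of $q(\mathbf e_0,\mathbf e_0;\cdot)$ is provably smaller than the posterior mode of $q(\mathbf e_i,\mathbf e_j;\cdot)$ at any already-visited sub-optimal NE $(i,j)$ with $i,j\neq 0$, which is close to $1-\delta$. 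Consequently all greedy exploitation is absorbed into the sub-optimal-NE cells and contributes only to $f_2$, leaving the uniform component as the sole source of $f_1$-mass.

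Once the lemma is in hand, the two bounds fall out by accounting for the total uniform-exploration budget on cell $(0,0)$. For fixed $\epsilon\le 1/2$, the per-step probability of visiting $(0,0)$ is at most $\epsilon/|\mathcal A|$, so to accumulate the $\Omega(\delta^{-1})$ visits required by the necessary condition of Theorem~\ref{theorem:uniform}, one needs $T=\Omega(|\mathcal A|\delta^{-1}\epsilon^{-1})$ steps. For the decaying schedule $\epsilon(t)=1/t$, the cumulative uniform mass placed on $(0,0)$ through step $T$ is
\begin{equation}
\sum_{t=1}^{T}\frac{\epsilon(t)}{|\mathcal A|}=\frac{H_T}{|\mathcal A|}=\Theta\!\left(\frac{\log T}{|\mathcal A|}\right),
\end{equation}
so requiring this to be $\Omega(\delta^{-1})$ forces $\log T=\Omega(|\mathcal A|\delta^{-1})$, i.e.\ $T=\exp(\Omega(|\mathcal A|\delta^{-1}))$.

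The main obstacle I expect is making the structural lemma fully rigorous while the $Q$-function is updated online, because the greedy argmax is itself a stochastic process driven by past samples. In particular one has to rule out rare trajectories where an anomalously large draw of the cross-term $W_{00}$ (through the correlations induced by the quadratic parameterisation and the linear terms $b_0,c_0$) momentarily makes $(0,0)$ greedy-optimal. I would handle this by an inductive argument: conditioning on the event that $(0,0)$ has never been sampled, the posterior on $W_{00}$ is just the prior $\mathcal N(0,\sigma_w^2)$, and the three regularity assumptions ($\delta\le 1/32$, $U\ge 4$, $U\ge \sigma_w\sigma_e^{-1}$) are exactly what is needed so that a standard Gaussian tail bound keeps the probability of a bad greedy selection uniformly below a constant, closing the induction and preserving the lower bounds above.
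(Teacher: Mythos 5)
Your proposal follows essentially the same route as the paper's proof: the greedy component of $\epsilon$-greedy stays pinned at the sub-optimal block, so all mass on $(0,0)$ comes from the uniform component, and the necessary criterion from the lemma forces the expected number of $(0,0)$ visits (equivalently $f_0\lambda$) to be $\Omega(\delta^{-1})$, which yields the two bounds via the per-step rate $\epsilon/|\mathcal A|$ and the harmonic sum $\sum_t 1/t = \Theta(\log T)$ respectively. The only substantive difference is that the paper's objective $\mathcal J^{(T)}$ is an expectation over $p_e^{(t)}$ rather than a sum over realized samples, so the ``structural lemma'' you flag as the main obstacle reduces to the deterministic observation that a non-equivalently-optimal $Q$ never has its argmax at $(0,0)$, and no Gaussian tail bounds or induction over random trajectories are needed.
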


{The proof is similar to that of Theorem~\ref{theorem:uniform} (detailed in Appendix A.3).} By comparing \ref{theorem:uniform} and \ref{theorem:epsilon}, $\epsilon$-greedy results in even poorer exploration efficiency than uniform exploration. 
Note the $\epsilon$-greedy strategy is training policy specific, i.e., the exploration behavior varies as the training policy changes.
Theorem \ref{theorem:epsilon} suggests that when the policy is sub-optimal, the induced $\epsilon$-greedy exploration strategy can be even worse than uniform exploration. 
Hence, it can be beneficial to adopt a separate exploration independent from the training policy. 

{The above analysis shows that common exploration strategies like uniform exploration or $\epsilon$-greedy exploration are inefficient for such a simple game and the main reason is that it requires coordination between different agents to reach high-rewarding states, but naive exploration strategies lack such cooperation.}

\subsection{Structured Exploration}
\label{section:thereom-structure}
We will show that it is possible to design a better exploration strategy with some prior knowledge of the climb game structure. Consider a specific structured exploration strategy $p_e^{(t)}(i,j)=U^{-1} \left[ \mathds{1}_{i=j} \right]$, where both agents always choose the same action. With such a strategy, we can quickly find the optimal solution to the game. More formally, we have the following theorem.
\begin{theorem}[structured exploration]
\label{theorem:structured}
In the climb game $G_f(2,0,U)$, under structured exploration $p_e^{(t)}(i,j)=U^{-1} \left[ \mathds{1}_{i=j} \right]$, $q_{\mathcal J^{(T)}}(\mathbf{W}, \mathbf{b}, \mathbf{c}, d)$ is equivalently optimal at step $T=O(1)$.
\end{theorem}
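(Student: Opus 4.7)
The plan is to exploit the fact that structured exploration places mass only on the $U$ diagonal joint actions $(i,i)$, so the off-diagonal entries of $\mathbf{W}$ receive no likelihood signal and are pinned by the prior to zero; unlike the uniform case, no amount of prior-versus-likelihood balancing is needed, which is precisely why $T=O(1)$ suffices. Concretely, I would first specialize $\mathcal{J}^{(T)}$ to $p_e^{(t)}(i,j)=U^{-1}\mathds{1}_{i=j}$, collapsing the sample expectation to a sum over $i=0,\dots,U-1$:
\[
\mathcal{J}^{(T)} = -\frac{T}{2\sigma_e^2 U}\sum_{i=0}^{U-1}\bigl(W_{ii}+b_i+c_i+d-R_{ii}\bigr)^2 \;-\; \frac{1}{2\sigma_w^2}\|\mathbf{W}\|_F^2 + \text{Const},
\]
a concave quadratic fully characterized by its stationary conditions.

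Next, I would extract those conditions. Since $W_{ij}$ with $i\neq j$ appears only in the prior, every maximizer sets $W_{ij}=0$. Stationarity in $b_i$ and $c_i$ forces the exact diagonal fit $W_{ii}+b_i+c_i+d=R_{ii}$; plugging this into the $W_{ii}$ condition yields $W_{ii}=0$ and hence $b_i+c_i+d=R_{ii}$ for each $i$.

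To resolve the remaining degeneracy in $(\mathbf{b},\mathbf{c},d)$, I would select the canonical symmetric maximizer $b_i=c_i$, justified by the objective's invariance under $\mathbf{b}\leftrightarrow\mathbf{c}$, or equivalently as the limit of the MAP under a vanishing isotropic prior on $(\mathbf{b},\mathbf{c},d)$. This gives the closed form
\[
q(\mathbf{e}_i,\mathbf{e}_j) \;=\; W_{ij}+b_i+c_j+d \;=\; \tfrac{1}{2}\bigl(R_{ii}+R_{jj}\bigr)\qquad (i\neq j).
\]
Substituting $R_{00}=1$ and $R_{ii}=1-\delta$ for $i\neq 0$, one obtains $q(\mathbf{e}_0,\mathbf{e}_0)=1$, $q(\mathbf{e}_i,\mathbf{e}_i)=1-\delta$, off-diagonal $1-\delta/2$ when exactly one endpoint is $0$, and $1-\delta$ when neither endpoint is $0$. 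Every value is strictly less than $q(\mathbf{e}_0,\mathbf{e}_0)$ except at $(\mathbf{e}_0,\mathbf{e}_0)$ itself, which is precisely equivalent optimality.

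Finally, since $\mathcal{J}^{(T)}$ is the population (expectation) objective under $p_e^{(t)}$, its argmax does not depend on $T$ beyond an overall scale of the likelihood term; equivalent optimality therefore already holds at $T=1=O(1)$. The main delicacy I expect is justifying the symmetric tie-breaking step, since the degenerate subspace of maximizers is $(U+1)$-dimensional and, in principle, adversarial choices of $(\mathbf{b},\mathbf{c},d)$ within it could push off-diagonal predictions arbitrarily high; I would handle this either by invoking the explicit $\mathbf{b}\leftrightarrow\mathbf{c}$ symmetry of both the climb game and the objective, or by regularizing with a vanishing isotropic prior on $(\mathbf{b},\mathbf{c},d)$ paralleling the Gaussian prior already placed on $\mathbf{W}$.
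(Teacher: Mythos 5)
Your proof is correct and rests on the same core observation as the paper's: under diagonal-only exploration the off-diagonal entries of $\mathbf{W}$ receive no likelihood signal and are zeroed by the prior, while the diagonal rewards can be fit exactly through $\mathbf{b},\mathbf{c},d$ alone, so the prior term and the likelihood term are maximized simultaneously and no $T$-dependent trade-off ever arises. The paper's own proof is a one-liner that simply exhibits a claimed maximizer $\mathbf{W}=\mathbf{c}=0$, $\mathbf{b}=(1,0,\dots,0)^\top$, $d=0$; note that as written this configuration does not actually zero the residuals on the visited diagonal (it predicts $0$ rather than $r(1-\delta)$ at $(i,i)$, $i\neq 0$), so the correct exhibit is something like $d=r(1-\delta)$, $b_0=r\delta$, $b_i=0$ otherwise --- your stationarity derivation recovers exactly this family and is the cleaner argument. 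Your flagged delicacy about the $(U{+}1)$-dimensional degenerate set of maximizers is genuine: adversarial choices within it (e.g.\ $b_1\to+\infty$, $c_1\to-\infty$ with $b_1+c_1$ fixed) do produce non-equivalently-optimal $q$, and the paper does not address this --- it implicitly selects a representative, and its Lemma in Appendix A.1 invokes the same $\mathbf{b}\leftrightarrow\mathbf{c}$ symmetry you use, so your symmetric tie-breaking is the right resolution and is consistent with the paper's conventions elsewhere.
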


Theorem~\ref{theorem:structured} shows the efficiency of exploration can be greatly improved if the exploration strategy captures a proper structure of the problem, i.e., all agents taking the same action. %
We further remark that by considering a set of similar climb games $\mathcal{G}$, where $\mathcal{G}=\{G_f(2,u,U)\}_{u=0}^{U-1}$, the structured exploration strategy $p_e^{(t)}(i,j)=U^{-1} \left[ \mathds{1}_{i=j} \right]$ can be interpreted as a uniform distribution over the optimal policies of this game set $\mathcal{G}$. 
This interesting fact suggests that we can first collect a set of similarly structured games and then derive effective exploration strategies from these similar games. Once a set of structured exploration strategies are collected, we can further adopt them for fast learning in a novel game with a similar problem structure. 
We take the inspiration here and develop a general meta-exploration algorithm in the next section.

\section{Method}

We detail our method \textit{Cooperative Meta-Exploration in Multi-Agent Learning through Exploiting State-Action Space Structure} (MESA) for cooperative multi-agent learning. 
As shown in Figure \ref{fig:pipeline}, MESA consists of a meta-training stage (Algo.~ \ref{alg:meta_training_algorithm}) and a meta-testing stage (Algo.~\ref{alg:meta_testing_algorithm}). In the meta-training stage, MESA learns exploration policies by training in a batch of training tasks that share intrinsic structures in the state-action space. In the meta-testing stage, MESA utilizes the meta-learned exploration policies to assist learning in an unseen task sampled from the distribution of the training tasks.

\begin{figure}[tb]
\input{content/meta_train_algo}
\end{figure}

\begin{figure}[tb]
\input{content/meta_test_algo}
\end{figure}

\subsection{Meta-Training}
The meta-training stage 
contains two steps: 1) identify the high-rewarding state-action subspace, and 2) train a set of exploration policies using the subspace-induced rewards.

\subsubsection{Identifying High-Rewarding Joint State-Action Subspace}

For each training task $\mathcal{T}_i$, we collect experiences $\mathcal{D}_i = \{(s_t,\boldsymbol{a}_t,r_t,s_{t+1})\}$. If the reward $r_t$ is higher than a threshold $R^\star$, we call this joint state-action pair $(s_t,\boldsymbol{a}_t)$ valuable and store it into a dataset $\mathcal{M}_*$. 
For goal-oriented tasks where $r=\mathds{1}_{s=goal}$, the threshold can be set as $R^\star=1$. For other tasks, the threshold can be set as a hyperparameter, for example, a certain percentile of all collected rewards. {A smaller $R^\star$ results in a larger identified subspace but a less efficient exploration policy.}

{The data stored in $\mathcal{M}_*$ is highly diversified since it comes from all the $B$ training tasks, which are expected to share an intrinsic structure.
}
{We expect that with this intrinsic structure, the high-rewarding joint state-action pairs fall into some low-dimensional subspace.}
In the simplest case, they may form several dense clusters, or many of them lie in a hyperplane. Even if the subspace is not easily interpretable to humans, it may still be effectively ``covered'' by a set of exploration policies (to be found in the subsequent step).

We also explicitly deal with the reward sparsity problem by assigning a positive reward to a joint state-action pair $(s_t,\boldsymbol{a}_t)$ if it has zero reward but leads to a valuable state-action pair $(s_{t'},\boldsymbol{a}_{t'})$ later in the same trajectory. We also put these relabeled pairs into the dataset $\mathcal{M}_*$. Let $t' = \arg\min_{t'>t} [{r}_{t'} > 0]$, we therefore have the following densified reward function
\begin{equation} \label{eq:relabel_reward_dense}
    \begin{aligned}
        \hat{r}_t = \begin{cases}
            \gamma^{t'-t} \cdot {r}_{t'}, &{r}_t = 0, \\
            {r}_t, &{r}_t > 0.
        \end{cases}
    \end{aligned}
\end{equation}

\subsubsection{Learning Exploration Policies} 
In this step, we aim to learn a diverse set of exploration policies to cover the identified high-rewarding joint state-action subspace. We use a distance metric $\Vert\cdot\Vert_{\mathcal F}$ (e.g., $l_2$ distance) to determine whether two state-action pairs are close.
Then if a visited joint state-action pair $(s,\boldsymbol{a})$ is close enough to the identified subspace $\mathcal M_*$, i.e., $\min_{d\in \mathcal{M}_{*}} \Vert (s,\boldsymbol{a}),d\Vert_{\mathcal F} < \epsilon$, it would be assigned a derived positive reward $\hat r$. Increasing the value of $B$ in the collection step would generally result in a more accurate distance measurement. However, this comes at the cost of making the minimization calculation more computationally expensive.

To encourage a broader coverage of the subspace and to avoid mode collapse, the reward assignment scheme ensures that repeated visits to similar joint state-action pairs within one trajectory would result in a decreasing reward for each visit. Similar to \cite{tang2017exploration}, we adopt a pseudo-count function $N$ with a hash function $\phi(\boldsymbol{s},\boldsymbol{a})$ to generalize between similar joint state-action pairs. We then apply a decreasing function $f_d: \mathcal{N} \mapsto [0,1]$ on the trajectory-level pseudo-count $N(\phi((s, \boldsymbol{a}))$. The resulted reward assignment scheme is defined as follows:
\begin{equation} \label{eq:relabel_reward}
    \tilde{r}_t = \hat{r}_t f_d(N(\phi((s_t, \boldsymbol{a}_t))) \left[ \mathds{1}_{\min_{d\in \mathcal{\mathcal M_*}} \Vert (s_t,\boldsymbol{a}_t),d\Vert_{\mathcal F} < \epsilon} \right]
\end{equation}

After one exploration policy is trained with this reward, we will train a new policy to cover the part of the identified subspace that has not yet been covered.
This is achieved by having a global pseudo-count $\hat{N}$ which is updated after training each exploration policy using its visitation counts and is maintained throughout the training of all exploration policies.
This iterative process continues until the subspace is well-covered by the set of trained exploration policies.

\subsection{Meta-Testing}

During meta-testing, MESA uses the meta-learned exploration policies $\{\boldsymbol{\pi}_e^i\}_{i=1}^E$ to assist the training of any generic off-policy MARL algorithm on a test-time task $\hat{\mathcal{T}}$. Specifically, for each rollout episode, we choose with probability $\epsilon$ to execute one uniformly sampled exploration policy $\boldsymbol{\pi}_e \sim \mathcal{U}(\{\boldsymbol{\pi}_e^i\}_{i=1}^E)$. 
For the best empirical performance, we also adopt an annealing schedule $\epsilon: T \mapsto [0,1]$ so that the exploration policies provide more rollouts at the initial stage of the training and are gradually turned off later. %

Here we further provide some analysis of deploying the meta-learned exploration policy on unseen testing tasks.

\begin{theorem}[Exploration during Meta-Testing]
 \label{theorem:exploration_generalize}
Consider goal-oriented tasks with goal space $\mathcal G \subseteq \mathcal S$. Assume the training and testing goals are sampled from the distribution $p(x)$ on $\mathcal G$, and the dataset has $N$ i.i.d. goals sampled from a distribution $q(x)$ on $\mathcal S$. If the exploration policy generalizes to explore $\epsilon$ nearby goals for every training sample, we have that the testing goal is not explored with probability at most
\begin{equation}
P_{\text{fail}}\approx \int p(x)(1-\epsilon q(x))^N dx \le O\left( \frac{KL(p||q)+\mathcal H(p) }{\log(\epsilon N)} \right).
\end{equation}
\end{theorem}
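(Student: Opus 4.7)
The plan is to proceed in two stages: first derive the integral expression $P_{\text{fail}}\approx \int p(x)(1-\epsilon q(x))^N dx$ from a coverage argument, then upper-bound this integral by the stated quantity via a threshold split plus Markov's inequality in log-space.

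For the derivation, a test goal $x\sim p$ fails to be explored exactly when none of the $N$ i.i.d.\ training goals $X_1,\dots,X_N\sim q$ falls within the neighborhood around $x$ that the trained exploration policy can reach. Under mild smoothness of $q$ and small $\epsilon$, the single-sample covering probability $\Pr_{X\sim q}[\|X-x\|\le \epsilon]$ is approximately $\epsilon q(x)$ (which is where the ``$\approx$'' in the theorem originates). Independence of the $N$ draws then gives $(1-\epsilon q(x))^N$ for the miss probability at $x$, and integrating against $p$ produces the claimed formula.

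For the bound, I would first apply the elementary $(1-y)^N\le e^{-Ny}$ to obtain $P_{\text{fail}}\le \int p(x)e^{-N\epsilon q(x)}dx$, then split the integration domain at a threshold $t\in(0,1)$:
\begin{equation}
P_{\text{fail}}\le e^{-N\epsilon t}\cdot \Pr_p[q(x)\ge t]+\Pr_p[q(x)<t]\le e^{-N\epsilon t}+\Pr_p[q(x)<t].
\end{equation}
The key identity $-\mathbb{E}_{x\sim p}[\log q(x)]=KL(p||q)+\mathcal{H}(p)$ (cross-entropy decomposition) combined with Markov's inequality gives $\Pr_p[q(x)<t]=\Pr_p[-\log q(x)>\log(1/t)]\le (KL(p||q)+\mathcal{H}(p))/\log(1/t)$. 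Finally, choosing $t=\log(\epsilon N)/(\epsilon N)$ yields $e^{-N\epsilon t}=1/(\epsilon N)=o(1/\log(\epsilon N))$ and $\log(1/t)=\Theta(\log(\epsilon N))$, so both terms collapse into the claimed $O((KL(p||q)+\mathcal{H}(p))/\log(\epsilon N))$ rate.

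The main obstacle is the Markov step in the continuous setting: $-\log q(x)$ need not be non-negative when $q(x)>1$, so one must apply Markov to its positive part and argue that the residual contribution from the region $\{q(x)>1\}$ is absorbed into the big-$O$ (cleanly under $p\ll q$, degrading otherwise). A secondary issue is rigorously justifying the $\epsilon q(x)$ coverage probability, which requires specifying how the exploration policy generalizes from each training sample to nearby goals; the theorem's ``$\approx$'' suggests this is to be handled informally rather than via a fully rigorous density argument.
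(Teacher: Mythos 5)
Your proposal is correct and reaches the same bound, but via a different mechanism than the paper. The paper also starts from the coverage argument (selecting training samples in the neighborhood $\Omega(x)$ with $|\Omega(x)|\ge\epsilon$ and invoking Lipschitz continuity of $g$ to justify $(1-\epsilon q(x))^N$) and also applies $(1-y)^N\le e^{-Ny}$; but where you split the domain at a threshold $t$ and apply Markov's inequality to $-\log q(x)$, the paper instead proves a standalone pointwise inequality, $e^{-kx}\le \frac{\log(1/x)}{\frac12\log k}+\frac{x}{\frac12\log k}+\frac{x}{k}$ for $k>16$, and integrates it against $p$ with $k=\epsilon N$. The two are morally the same ``Markov in log-space'' idea --- the paper's lemma is essentially your threshold split carried out pointwise at $x_0=2\log k/k$, close to your choice $t=\log(\epsilon N)/(\epsilon N)$ --- but your version is more elementary, avoiding the calculus (monotonicity of $kxe^{-kx}$, locating critical points) needed to verify the paper's lemma. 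The sign issue you flag is real and is exactly how the paper copes with it: the extra $\frac{x}{\frac12\log k}$ term in its lemma exploits $\log(1/x)+x\ge 0$ so that the integrand stays nonnegative even where $q(x)>1$, which is the pointwise analogue of your ``positive part'' fix. Note that both routes share the same silent absorption at the last step: the residual terms ($\mathbb{E}_p[q(x)]$ and an additive constant in the paper, your $e^{-N\epsilon t}=1/(\epsilon N)$ term and the $\{q>1\}$ contribution) are folded into $O\bigl((KL(p\|q)+\mathcal H(p))/\log(\epsilon N)\bigr)$ without justification that they are dominated by the cross-entropy, so your proposal is no less rigorous than the original on this point.
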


Theorem~\ref{theorem:exploration_generalize} shows that the good performance of meta-learned exploration policy relies on 1) a small difference between the training and testing distribution; and 2) a structured, e.g., low-dimensional, high-rewarding subspace $\mathcal G$ to reduce $\mathcal H(p)$. And when uniformly sampling the training data, $KL(p||q)$ is bounded by $\log \Omega_\mathcal{G}$ in our method. This term, however, can be up to $\log \Omega_\mathcal{S}$ with an uncoordinated exploration on the joint state space $\mathcal{S}$, where $\Omega_\mathcal{S}$ can be exponentially larger than $\Omega_\mathcal{G}$.

\subsection{Implementation Detail of MESA}
{We choose MADDPG, following the centralized training with decentralized execution (CTDE) paradigm, as the off-policy MARL algorithm for MESA since it can be applied to both discrete and continuous action space, as shown in its original paper~\cite{MADDPG}.}
We use a clustering mapping $f_c$ as the hash function $\phi$ so that the dataset $\mathcal{M}_*$ is clustered into $C$ clusters defined by the clustering function $f_c: \mathcal{S}\times\mathcal{A} \mapsto [C]$. The cluster mapping is implemented with the KMeans clustering algorithm~\cite{lloyd1982least}. The number of exploration policies to learn is viewed as a hyperparameter. See the Appendix for detailed hyperparameter settings.

\section{Experiments} \label{experiments}

Our experimental evaluation aims to answer the following questions:
{
(1) Are the meta-learned exploration policies capable of achieving more efficient exploration during meta-testing on newly sampled tasks in matrix climb game variants (Section \ref{results_matrix_climb}) and high-dimensional domains (Section \ref{results_MPE_climb} and \ref{mamujoco-results})?} { (2) Can these meta-learned exploration policies successfully generalize to unseen test-time tasks from a more challenging (e.g., with more agents) test task distribution which is different the training task distribution (Section \ref{results-generalize})?
}

\subsection{Evaluation Setup}
\label{sec:environments}

\textbf{Compared Methods.}
We compare to $3$ multi-agent reinforcement learning algorithms: MADDPG \cite{MADDPG}, MAPPO \cite{MAPPO}, and QMIX \cite{QMIX}, to measure the effectiveness of our exploration policies. We also compare to $3$ multi-agent exploration algorithm: MAVEN \cite{MAVEN}, MAPPO with RND exploration~\cite{RND}, and EMC~\cite{zheng2021episodic}.
{To compare with baselines that adopt a similar meta-training stage, we add two naive meta-learning baselines, including one with an unconditioned shared policy, which is trained over all training tasks, and one with a goal-conditioned policy, which takes the target landmarks as parts of the input. We also adapt the single-agent meta-RL algorithm MAESN~\cite{MAESN} to the multi-agent setting. Finally, we adapt the single-agent C-BET \cite{parisi2021interestingdeepak} to multi-agent settings based on MAPPO.}
The training and testing tasks are as defined in Section \ref{sec:environments}. Please refer to the Appendix for more visualization and experimental results.

\textbf{Environments.}
We experiment on the Climb Game, Multi-agent Particle Environment (MPE)~\cite{MADDPG}, and multi-agent MuJoCo~\cite{peng2021facmac}, on which generating a distribution of meta-training tasks $p(\mathcal{T})$ is feasible. %

\begin{figure}[tb]
    \centering
    \adjustbox{trim=0.5cm 0.5cm 0cm 0cm}{%
    \includegraphics[width=0.55\textwidth]{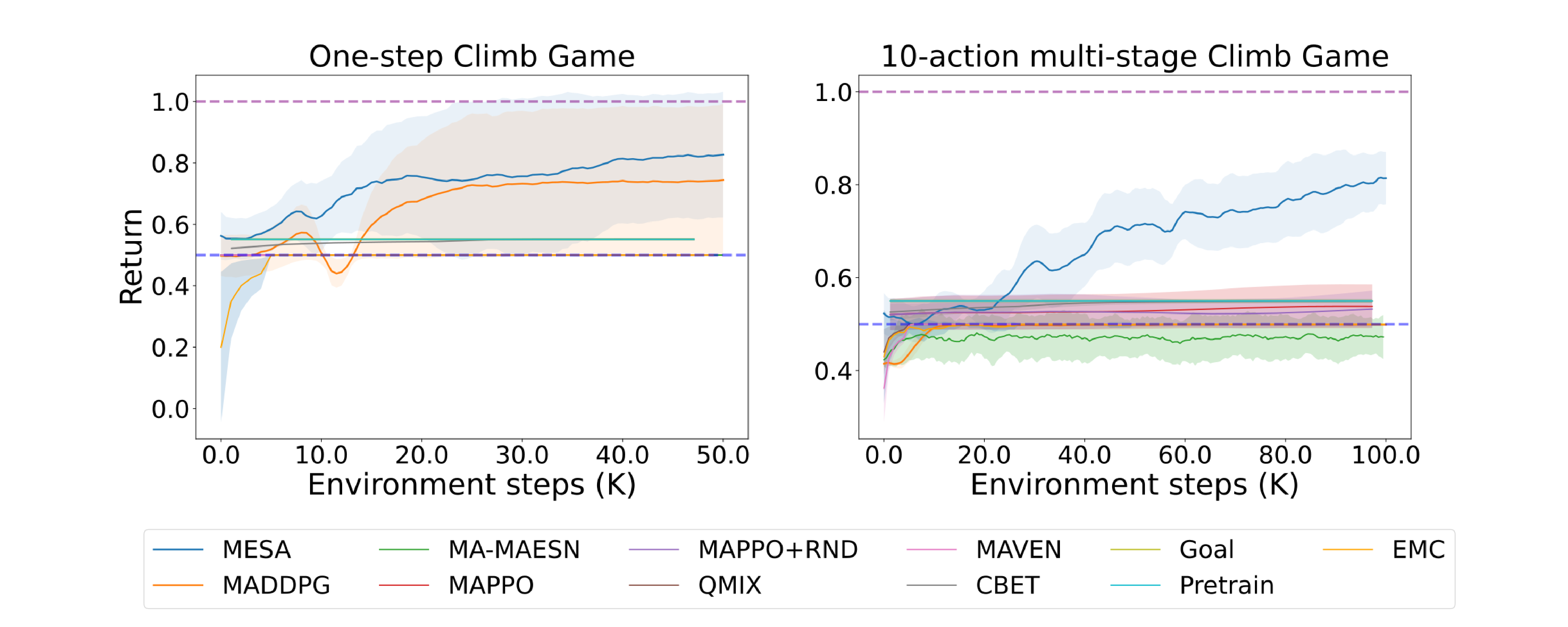}%
    }
    \label{fig:climb_single}
    \caption{Learning curve of the two climb game variants w.r.t number of environment steps. The return is averaged over timesteps for the multi-stage games. The dotted lines indicate the suboptimal return of $0.5$ (purple) and the optimal return $1$ (blue) for each agent. %
    }
    \Description{Learning curve of the two climb game variants w.r.t number of environment steps. The return is averaged over timesteps for the multi-stage games. The dotted lines indicate the suboptimal return of $0.5$ (purple) and the optimal return $1$ (blue) for each agent.}
    \label{fig:climb_game}
\end{figure}

\subsection{Climb Game Variants}

\begin{figure*}[!ht]
    \centering
    \adjustbox{trim=2.3cm 0.5cm 0.5cm 0cm}{%
        \includegraphics[width=1.24\textwidth]{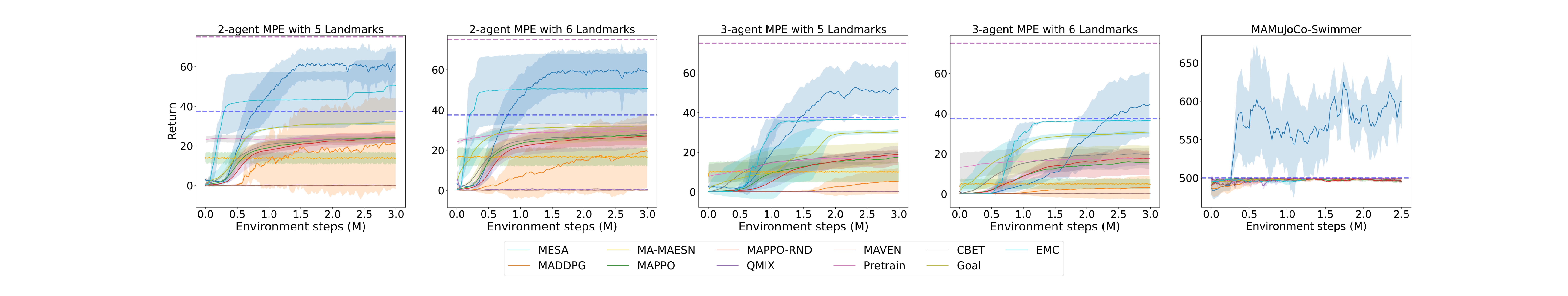}
    }
    \caption{Learning curves of MESA and the compared baselines w.r.t the number of environment interactions during the meta-testing stage in the MPE domain and the multi-agent MuJoCo environment Swimmer. {The two dotted lines indicate the ideal optimal (purple) and sub-optimal (blue) return summed over timesteps. A return above the blue line would typically indicate that the agents are able to learn the optimal strategy.}}
    \Description{Learning curves of MESA and the compared baselines w.r.t the number of environment interactions during the meta-testing stage in the MPE domain and the multi-agent MuJoCo environment Swimmer. {The two dotted lines indicate the ideal optimal (purple) and sub-optimal (blue) return summed over timesteps. A return above the blue line would typically indicate that the agents are able to learn the optimal strategy.}}
    \label{fig:MPE_Mujoco}
\end{figure*}

First, we consider task spaces consisting of variants of the aforementioned climb games.
We extend previous climb game to (1) \textbf{one-step climb game} $G(n,k,u,U)$, which is a $n$-player game with $U$ actions for each player, and the joint reward is $1$ if \#$u = k$, $1 - \delta$ if \#$u = 0$, and $0$ otherwise. The task space $\mathcal{T}^{\text{one}}_U$ consists of all one-step climb games that contain two players and $U$ actions; (2) \textbf{multi-stage climb game}, which is an $S$-stage game where each stage is a one-stage climb game with the same number of available actions. Each stage $t$ has its own configuration $(k_t, u_t)$ of the one-stage climb game $G(2, k_t, u_t, U)$. Agents observe the history of joint actions and the current stage $t$. The task space $\mathcal{T}^{\text{multi}}_{S,U}$ consists of all multi-stage climb games with $S$ stages and $U$ actions.
In our experiments, we use $\mathcal{T}^{\text{one}}_{10}$ and $\mathcal{T}^{\text{multi}}_{5,10}$ as the task space for the one-step and multi-stage Climb Games. We choose uniformly at random ten training tasks and three different test tasks from the task space $\mathcal{T}$, and we keep $\delta=\frac12$ as in the classic climb games.

\textbf{Results on Climb Game Variants.} \label{results_matrix_climb} For the matrix games, we additionally compare with MA-MAESN, which is our adaptation of the original single-agent meta-learning algorithm MAESN~\cite{MAESN} to the multi-agent scenario %
In the single-step matrix game, MESA exhibits better performance, being able to find the optimal reward in some harder tasks when $k = 2$, while other baselines are stuck at the sub-optimal reward for almost all tasks.

In the more challenging $10$-action multi-stage game where task space is exponentially larger, MESA outperforms all compared algorithms by a large margin.
With the help of the exploration policies that have learned the high-rewarding joint action pairs, MESA quickly learns the optimal joint action for each stage and avoids being stuck at the sub-optimal.

\begin{figure}[!h]
    \centering
    \adjustbox{trim=0.5cm 0cm 0.5cm 0cm}{%
        \includegraphics[width=.3\textwidth]{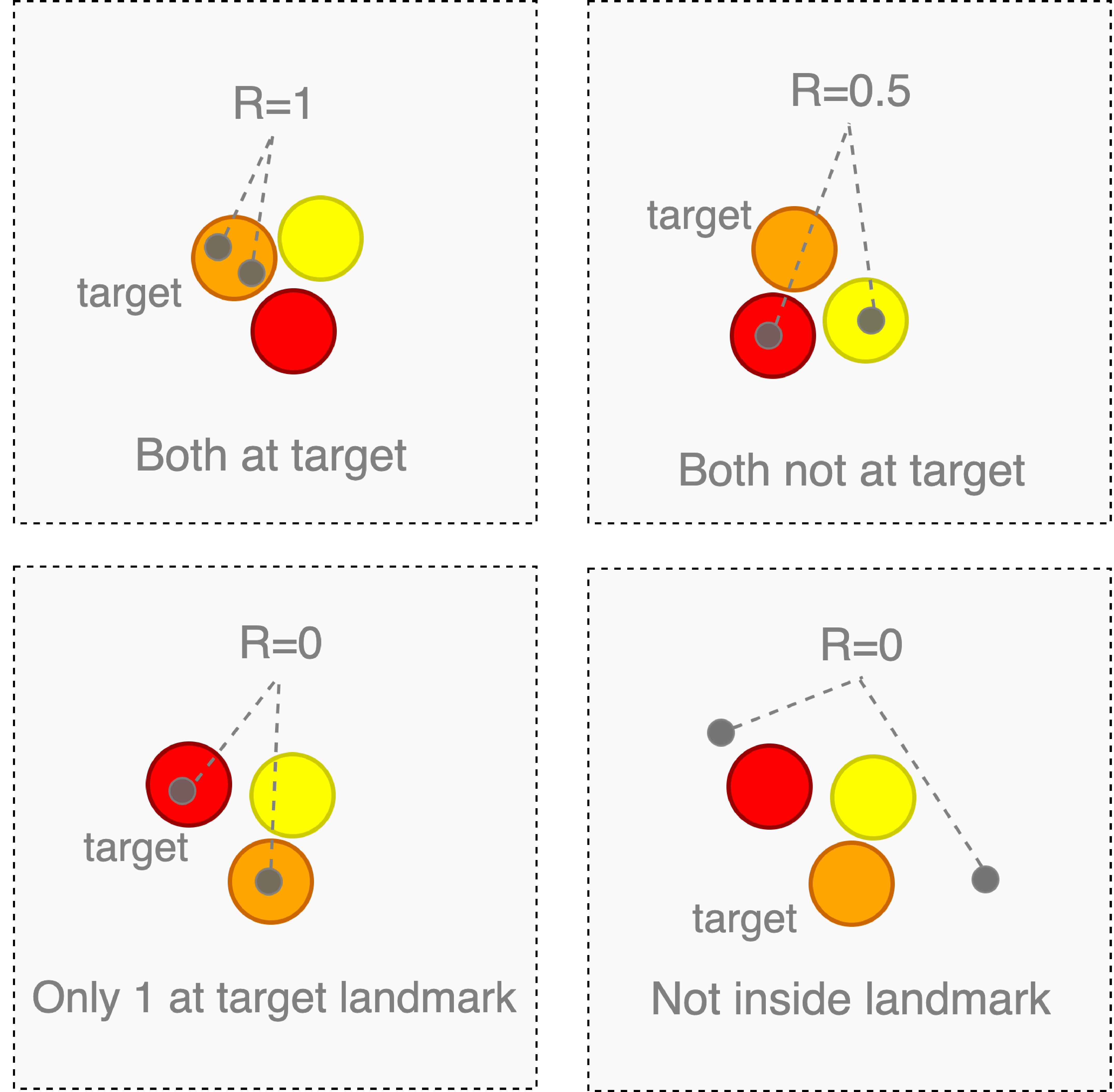}
    }
    
    \caption{Visualizations of a $2$-player $3$-landmark MPE climb game. %
    }
    \Description{Visualizations of a $2$-player $3$-landmark MPE climb game with four types of reward.}
    \label{fig:MPE_Illu}
\end{figure}

\subsection{MPE Domain}

We extend the matrix climb games to MPE \cite{MADDPG}, which has a continuous high-dimensional state space. Agents must first learn to reach the landmarks under sparse rewards and then learn to play the climb games optimally.

In a MPE Climb Game $\bar{G}(n,k,u,U,\{L_j\}_0^{U-1})$ (Figure \ref{fig:MPE_Illu}), there are $U$ non-overlapping landmarks with positions $\{L_j\}_{j=0}^{U-1}$.
The reward is non-zero only when every agent is on some landmark. Agents will be given a reward of $1$ if there are exactly $k$ agents located on the $u$-th landmark (target landmark), and a suboptimal reward of $1-\delta$ will be given when none of the agents are located on the target landmark. Otherwise, the reward will be zero. As before, $u$ and $k$ are not present in the observation and can only be inferred from the received reward.
A task space $\mathcal{T}^{\text{MPE}}_{n,U}$ consists of all MPE climb games with $n$ players and $U$ landmarks.
We evaluate MESA on the $2$-agent tasks ($\mathcal{T}^{\text{MPE}}_{2,5}$ and $\mathcal{T}^{\text{MPE}}_{2,6}$) and $3$-agent tasks ($\mathcal{T}^{\text{MPE}}_{3,5}$ and $\mathcal{T}^{\text{MPE}}_{3,6}$) while fixing $k=2$. Each sampled training and testing task has a different configuration of landmark positions.

\textbf{Adaptation Performance in MPE.} \label{results_MPE_climb} We show in Figure \ref{fig:MPE_Mujoco} the learning curve of our approach MESA compared with the aforementioned baseline methods. MESA outperforms the compared baselines by a large margin, being able to coordinately reach the task landmark quickly, as evidenced by the near-optimal reward. Even when combined with RND-based exploration, MAPPO easily sticks to the sub-optimal equilibrium. Value-based methods like QMIX and MAVEN are unable to learn the correct $Q$-function because the reward is quite sparse before agents can consistently move themselves to a landmark. EMC sometimes jumps out of the suboptimal equilibrium with curiosity-driven exploration, but the performance is not robust. Furthermore, as the meta-learning baselines only learn the sub-optimal behavior during meta-training, they fail to learn the optimal equilibrium during test time and quickly converge to the suboptimal equilibrium.

\textbf{Visualization of Exploration Policies.} 
\label{visualization}
To answer question (2), we visualize the learned exploration policies in a $2$-agent $3$-landmark MPE task in Figure \ref{fig:mode}. We can see that the learned exploration policy consecutively visited the $3$ landmarks within $20$ timesteps in one trajectory.

\begin{figure}[t]
    \centering
    \includegraphics[width=0.35\textwidth]{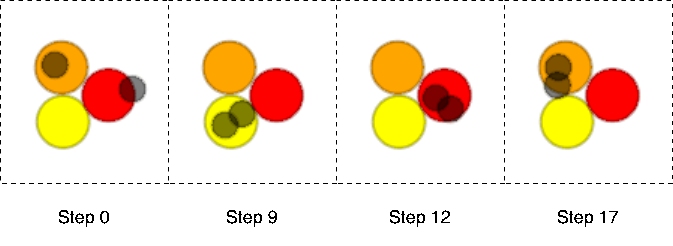}
    \caption{Visualization of structured exploration behaviors discovered by the meta-trained exploration policy in MESA.} %
    \Description{Visualization of structured exploration behaviors discovered by the meta-trained exploration policy in MESA.}
    \label{fig:mode}
\end{figure}

\subsection{Multi-agent MuJoCo Environments}
\label{mamujoco-results}
We also extend the matrix climb games to multi-agent MuJoCo environments~\cite{peng2021facmac}. We consider specifically the $2$-agent Swimmer environment where each agent is a hinge on the swimmer's body, and each agent's action is the amount of torque applied to hinge rotors. The extension considers the angles between the two hinges and the body segments. Each task in the task space is a target angle such that a reward of $1$ will be given only if the two angles are both close to the target angles, a $0.5$ suboptimal reward is given if none of two angles are close to the target, and a reward of $0$ if only one of the two angles are close.

This multi-agent environment is extremely hard as agents are very likely to converge to the suboptimal reward of $0.5$, which is confirmed by the results that none of the baselines were able to
{find the optimal equilibrium}
in Figure \ref{fig:MPE_Mujoco}. Therefore, MESA vastly outperforms all the compared baselines by learning a final policy that frequently reaches the target angle.

\subsection{Generalization Performance of MESA}
\label{results-generalize}

{In this section, our goal is to evaluate the generalization performance of the meta-trained exploration policy in scenarios where the meta-training and meta-testing task distributions are different. In particular, we focus on the setting where the test-time tasks are \emph{more challenging} than the training-time tasks and examine how an exploration policy learned from simpler tasks can boost training performances on harder tasks.}

{
The test task here is uniform on the $3$-agent high-difficulty MPE Climb games. The task difficulty is defined by the average pairwise distances between the landmark positions and the initial positions of the agents. 
We consider two simpler training task distributions, including
(1) a $2$-agent setting with the same difficulty, and (2) a $3$-agent setting with a lower difficulty. In both settings, the meta-training tasks are less challenging than the test-time tasks. For evaluation, the meta-trained exploration policy from each setting will be directly applied to assist the training on the more challenging test-time tasks, \emph{without any fine-tuning}.}

{
We modified the neural network architecture by adopting an attention layer in both actor and critic to ensure they are compatible with a varying number of agents.
The attention mechanism acts as an aggregation function between the relative positions of the other agents and its own relative position to the landmarks to handle the varying observation dimensions.
Additionally, we employed behavior cloning (BC)~\cite{pomerleau1991efficient} on the rollouts of the exploration policies as a warm-up to accelerate learning of the final policy.}

\begin{figure}[h]
    \centering
    \includegraphics[scale=0.16,trim={0 0 0 0}]{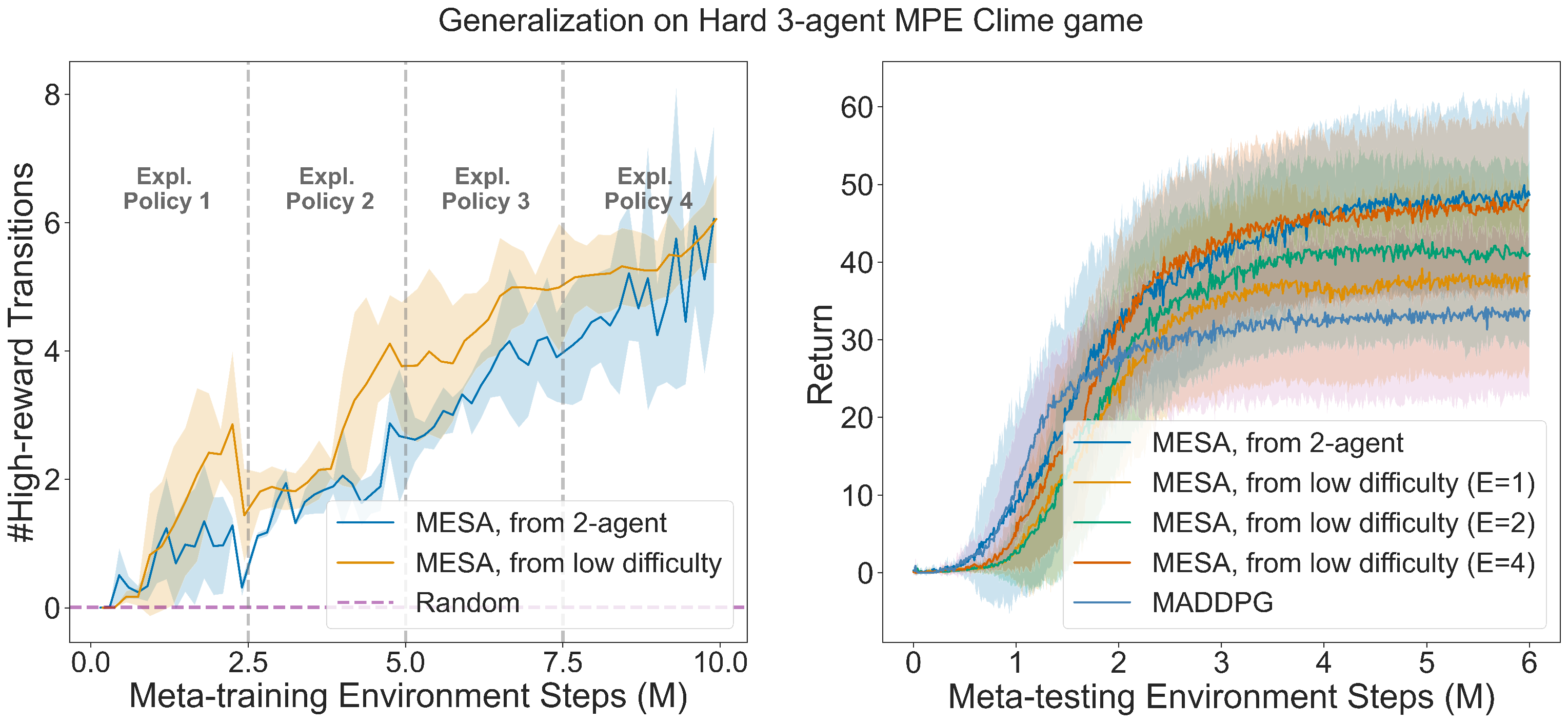}
    \caption{Generalization results of MESA on the hard $3$-agent MPE Climb game. Left: Zero-shot generalizability of the meta-exploration policies, measured by {the number of visitations on high-reward transitions per episode} on the test tasks. The purple dotted line corresponds to the random exploration policy. The plot shows the concatenated training curves for all exploration policies. Right: Learning curves of MESA under different settings using the meta-exploration policies trained on the two different training-task distributions.}
    \Description{Generalization results of MESA on the hard $3$-agent MPE Climb game. Left: Zero-shot generalizability of the meta-exploration policies, measured by {the number of visitations on high-reward transitions per episode} on the test tasks. The purple dotted line corresponds to the random exploration policy. The plot shows the concatenated training curves for all exploration policies. Right: Learning curves of MESA under different settings using the meta-exploration policies trained on the two different training-task distributions.}
    \label{fig:gen}
\end{figure}

{
In Figure \ref{fig:gen}, we present the generalization results from our study. We evaluate the zero-shot generalization ability of the meta-exploration policy by measuring the average number of high-reward transitions hit in a test task randomly sampled from the test task distribution.
As shown on the left of Figure \ref{fig:gen}, the meta-exploration policies are able to explore the test-time tasks much more efficiently than a random exploration policy, even on test-time tasks that are drawn from a harder task distribution. Notably, the generalization ability increases with the number of exploration policies ($B$).
Using the meta-exploration policies trained on the simpler tasks, MESA is able to consistently reach the high-reward region in the unseen hard $3$-agent tasks, as opposed to the vanilla MADDPG algorithm that only learns the sub-optimal equilibrium. We also see that with an increasing number of meta-exploration policies, the performance of MESA increases, but the improvement becomes marginal, while the meta-training time increases linearly with E.}

\section{Conclusions}
This paper introduces a meta-exploration method, MESA, for multi-agent learning. The key idea is to learn a diverse set of exploration policies to cover the high-rewarding state-action subspace and achieve efficient exploration in an unseen task. MESA can work with any off-policy MARL algorithm, and empirical results confirm the effectiveness of MESA in climb games, MPE environments, and multi-agent MuJoCo environments {and its generalizability to more complex test-time tasks}. %

\begin{acks}
This research is supported in part by NSF IIS-2046640 (CAREER) and Sloan Research Fellowship. We thank NVIDIA for providing computing resources. Zhicheng Zhang is supported in part by SCS Dean’s Fellowship. The funders have no role in study design, data collection and analysis, decision to publish, or preparation of the manuscript.
\end{acks}

\bibliographystyle{ACM-Reference-Format}
\balance
\bibliography{sample}

\clearpage

\appendix

\newpage
\begin{center}
\huge \textbf{Appendix}
    
\end{center}
\section{Proofs}
\label{appendx:proofs}

\subsection{Proof of Lemma on Equivalent Optimality}
\label{appendx:proof-lemma}

\begin{lemma}
In the 2-agent Climb Game with single-agent action space $|\mathcal A|=U$ and reward matrix
$$R=\begin{pmatrix} r & 0 & \cdots & 0 \\ 0 & r(1-\delta) & \cdots & r(1-\delta) \\ \vdots & \vdots & \ddots & \vdots \\ 0 & r(1-\delta) & \cdots & r(1-\delta) \end{pmatrix},$$
for any exploration policy $p_e$ where $p_e^{(t)}(i,j)$ is the probability of trying action $(i,j)$ at time step $t$, given the objective function
\begin{align}
\label{lemma:objective}
& \mathcal J^{(T)}(\mathbf{W}, \mathbf{b}, \mathbf{c}, d) \notag \\
=& \sum_{t=1}^T \mathbb E_{(i,j)\sim p_e^{(t)}} \left[ \log \mathcal N(q(\mathbf{e}_i, \mathbf{e}_j;\mathbf{W}, \mathbf{b}, \mathbf{c}, d)-R_{ij}; 0,\sigma^2_e) \right] \notag \\  +& \log \mathcal N(W;0, \sigma^2_W I) + \text{Constant}
\end{align}
maximized by parameters $\mathbf{W}^*, \mathbf{b}^*, \mathbf{c}^*, d^*$, the joint Q function \\$q(\mathbf{e}_i, \mathbf{e}_j;\mathbf{W}^*, \mathbf{b}^*, \mathbf{c}^*, d^*)$ is equivalently optimal if the following criterion holds
\begin{equation}
\label{lemma:criterion}
    r\delta \ge \left(\frac{f_2}{f_0}-1\right) \frac{m^2}{f_2\lambda +m^2} \frac{ r(2-\delta) }{ 1 + \frac{2f_2(f_1\lambda+2m)m}{f_1(f_2\lambda+m^2)} + \frac{f_2(f_0\lambda+1)m^2}{f_0(f_2\lambda+m^2)} }.
\end{equation}

Here we use
\begin{align*}
    f_0 =& \frac{1}{T} \sum_{t=1}^T p_e^{(t)}(0,0) \\
    f_1 =& \frac{1}{T} \sum_{t=1}^T\sum_{i=1}^{U-1} \left(p_e^{(t)}(0,i)+p_e^{(t)}(i,0)\right) \\
    f_2 =& \frac{1}{T} \sum_{t=1}^T\sum_{i=1}^{U-1}\sum_{j=1}^{U-1} p_e^{(t)}(i,j) \\
    m =& U - 1 \\
    \lambda =& \frac{T\sigma_{w}^2}{\sigma_e^2}
\end{align*}
for a clearer demonstration of the criterion.

\end{lemma}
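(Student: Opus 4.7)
The plan is to solve the MAP estimator for the objective $\mathcal J^{(T)}$ in closed form using its Gaussian structure, exploit the symmetry of the reward matrix $R$ to collapse the first-order conditions into a $2\times 2$ linear system, and then match the inequality $q^*(\mathbf e_0,\mathbf e_0)\ge\max_{(i,j)}q^*(\mathbf e_i,\mathbf e_j)$ directly against \eqref{lemma:criterion}.

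First I would write $n_{ij}=\sum_t p_e^{(t)}(i,j)$, $\mu=\sigma_e^2/\sigma_w^2$, and $\lambda=T/\mu$, and solve $\partial\mathcal J^{(T)}/\partial W_{ij}=0$ in closed form as $W_{ij}^*=n_{ij}(R_{ij}-b_i-c_j-d)/(n_{ij}+\mu)$. Substituting back gives the convex-combination formula
\begin{equation*}
q^*(\mathbf e_i,\mathbf e_j) = w_{ij}R_{ij} + (1-w_{ij})(b_i+c_j+d),\qquad w_{ij}=\tfrac{n_{ij}}{n_{ij}+\mu},
\end{equation*}
and reduces $\mathcal J^{(T)}$ to a weighted least-squares in $(\mathbf b,\mathbf c,d)$ alone.

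Because $R$ has only three equivalence classes -- the single cell $(0,0)$ of reward $r$, the $2m$ border cells of reward $0$, and the $m^2$ interior cells of reward $r(1-\delta)$ -- and because the criterion uses only the class aggregates $f_0,f_1,f_2$, I would work in the class-symmetric regime $n_{00}=Tf_0$, $n_{0j}=n_{j0}=Tf_1/(2m)$, $n_{ij}=Tf_2/m^2$, symmetrizing $p_e^{(t)}$ inside each class if necessary. The residual symmetry then forces the MAP to satisfy $b_0=c_0=:\beta_0$ and $b_i=c_i=:\beta_1$ for all $i\ge 1$, so setting $s_{00}=2\beta_0+d$, $s_{11}=2\beta_1+d$, $s_{01}=\beta_0+\beta_1+d=(s_{00}+s_{11})/2$, together with
\begin{equation*}
W_0=\tfrac{f_0\lambda}{f_0\lambda+1},\qquad W_1=\tfrac{f_1\lambda}{f_1\lambda+2m},\qquad W_2=\tfrac{f_2\lambda}{f_2\lambda+m^2},
\end{equation*}
the two independent stationarity equations $\partial/\partial b_0=0$ and $\partial/\partial b_i=0$ ($i\ge 1$) become a $2\times 2$ linear system in $(s_{00},s_{11})$, namely $W_0(s_{00}-r)+mW_1 s_{01}=0$ and $W_1 s_{01}+mW_2(s_{11}-r(1-\delta))=0$.

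Using this system to eliminate the residuals $s_{00}-r$ and $s_{11}-r(1-\delta)$ yields the compact identities $q^*(\mathbf e_0,\mathbf e_0)=s_{00}+mW_1 s_{01}$ and $q^*(\mathbf e_i,\mathbf e_j)=s_{11}+(W_1/m)s_{01}$ for $i,j\ge 1$, while Cramer's rule gives $s_{01}=r(2-\delta)\,amc/(2D)$ and $s_{00}-s_{11}=r\delta-2bs_{01}(m^2c-a)/(amc)$, where $a=W_0$, $b=W_1/2$, $c=W_2$, $D=ab+amc+m^2bc$. Therefore $q^*(\mathbf e_0,\mathbf e_0)\ge q^*(\mathbf e_i,\mathbf e_j)$ becomes equivalent to
\begin{equation*}
r\delta\ \ge\ \frac{b(2-\delta)\,[m^2c(1-a)-a(1-c)]}{D},
\end{equation*}
and expanding $a,b,c,D$ in terms of $f_0,f_1,f_2,\lambda,m$ and cancelling common factors reproduces \eqref{lemma:criterion} exactly. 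The main obstacle is this last algebraic collapse: one has to simplify $m^2c(1-a)-a(1-c)=m^2\lambda(f_2-f_0)/[(f_0\lambda+1)(f_2\lambda+m^2)]$, expand the three summands of $D$ over the common denominator $(f_0\lambda+1)(f_1\lambda+2m)(f_2\lambda+m^2)$, and re-factor the result so that $(f_2/f_0-1)$, $m^2/(f_2\lambda+m^2)$, and the three-term denominator of \eqref{lemma:criterion} appear in the right places. The comparison with the border cells $q^*(\mathbf e_0,\mathbf e_0)\ge q^*(\mathbf e_0,\mathbf e_j)$ for $j\ge 1$ reduces to $s_{00}+s_{01}((m+1)W_1-1)\ge 0$; since those cells carry reward $0$ whereas $(0,0)$ carries reward $r$, this is a strictly weaker condition and is expected to follow automatically under \eqref{lemma:criterion}.
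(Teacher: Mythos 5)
Your proposal is correct and follows essentially the same route as the paper's proof: both reduce the MAP problem by the three-class symmetry of $R$, solve the stationarity conditions (your profiled $2\times 2$ system in $(s_{00},s_{11})$ with the shrinkage weights $\tfrac{f_0\lambda}{f_0\lambda+1},\tfrac{f_1\lambda}{f_1\lambda+2m},\tfrac{f_2\lambda}{f_2\lambda+m^2}$ encodes exactly the paper's relations $W_0=-mW_1$, $W_1=-mW_2$ together with its $K_0,K_1,K_2$ identities), and obtain the criterion from $q^*(\mathbf e_0,\mathbf e_0)\ge q^*(\mathbf e_i,\mathbf e_j)$ for $i,j\ge 1$; I checked that your Cramer's-rule expressions for $s_{01}$ and $s_{00}-s_{11}$ and the final inequality do expand to the stated criterion (up to a dropped factor of $r$ multiplying $(2-\delta)$ in your displayed inequality, clearly a typo). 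The only differences are presentational --- you profile out $\mathbf W$ first --- and you are in fact more explicit than the paper about the within-class symmetrization of $p_e^{(t)}$ and the border-cell comparison, both of which the paper's proof leaves implicit.
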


\begin{proof}

From the symmetry of the parameters and the concavity of the objective function, $\exists W_0,W_1,W_2,B,C,D$ such that
\begin{align*}
    W_0 =& \mathbf{W}^*_{00} \\
    W_1 =& \mathbf{W}^*_{0i} = \mathbf{W}^*_{i0}, &\forall i\ne 0 \\
    W_2 =& \mathbf{W}^*_{ij}, &\forall i,j \ne 0 \\
    B =& \mathbf{b}^*_0 = \mathbf{c}^*_0 \\
    C =& \mathbf{b}^*_i = \mathbf{c}^*_i, &\forall i\ne 0 \\
    D =& d
\end{align*}

Rewrite the objective function (\ref{lemma:objective}) we obtain

\begin{align}
\label{lemma:principle}
    \mathcal J =& -\frac{T}{2\sigma_e^2} \left( f_0(W_0+2B+D-r)^2 \right.\notag \\
    +& f_1(W_1+B+C+D)^2  \notag \\
    +& \left. f_2(W_2 + 2C + D - r(1-\delta))^2 \right) \notag \\
    -& \frac{1}{2\sigma_w^2}\left( W_0^2 + 2mW_1^2 + m^2W_2^2 \right)
\end{align}

Further, let 
\begin{align*}
    K_0 =& 2B + D - r \\
    K_1 =& B + C + D \\
    K_2 =& 2C + D - r(1-\delta)
\end{align*}
and immediately
\begin{equation}
\label{lemma:basic-eq}
    K_0 + K_2 = 2K_1 - r(2-\delta)
\end{equation}

Following equation (\ref{lemma:principle}),
\begin{align*}
    \left( 2\frac{\partial}{\partial W_0} + \frac{\partial}{\partial W_1} - \frac{\partial}{\partial B} \right)\mathcal J = 0 \Rightarrow & W_0 = - m W_1 \\
    \left( 2\frac{\partial}{\partial W_2} + \frac{\partial}{\partial W_1} - \frac{\partial}{\partial C} \right)\mathcal J = 0 \Rightarrow & W_1 = - m W_2 \\
    \frac{\partial}{\partial W_0} \mathcal J = 0 \Rightarrow & W_0 = -\frac{f_0\lambda}{f_0\lambda + 1}K_0 \\ \frac{\partial}{\partial W_1} \mathcal J = 0 \Rightarrow & W_1 = -\frac{f_1\lambda}{f_1\lambda + 2m}K_1 \\ \frac{\partial}{\partial W_2} \mathcal J = 0 \Rightarrow & W_2 = -\frac{f_2\lambda}{f_2\lambda + m^2}K_2 \\
    \frac{\partial}{\partial B} \mathcal J = 0 \Rightarrow & \frac{W_0 + K_0}{W_1 +K_1} = -\frac{f_1}{2f_0} \\
    \frac{\partial}{\partial C} \mathcal J = 0 \Rightarrow & \frac{W_1 + K_1}{W_2 +K_2} = -\frac{2f_2}{f_1}
\end{align*}
and together with equation (\ref{lemma:basic-eq}), we obtain
\begin{align}
    K_2 = \frac{ r(2-\delta) }{ 1 + \frac{2f_2(f_1\lambda+2m)m}{f_1(f_2\lambda+m^2)} + \frac{f_2(f_0\lambda+1)m^2}{f_0(f_2\lambda+m^2)} }.
\end{align}

Finally we deduce the criterion
\begin{align*}
    & W_0 + 2B + D \ge W_2 + 2C + D \\
    \Leftrightarrow & r\delta \ge \left(1-\frac{f_2}{f_0} \right)(W_2+K_2) \\
    \Leftrightarrow & r\delta \ge \left(\frac{f_2}{f_0}-1\right) \frac{m^2}{f_2\lambda +m^2} \frac{ r(2-\delta) }{ 1 + \frac{2f_2(f_1\lambda+2m)m}{f_1(f_2\lambda+m^2)} + \frac{f_2(f_0\lambda+1)m^2}{f_0(f_2\lambda+m^2)} }.
\end{align*}

\end{proof}

\subsection{Proof for Theorem \ref{theorem:uniform} (uniform exploration)}

\textbf{Theorem \ref{theorem:uniform}.}
\emph{
Assume $\delta\le \frac{1}{6}, U \ge 3$. In the Climb Game $G_f(2,0,U)$, given the quadratic joint Q function form $q(\mathbf{x}, \mathbf{y};\mathbf{W}, \mathbf{b}, \mathbf{c}, d)$ and a Gaussian prior $p(\mathbf{W})=\mathcal N(\mathbf{W};0, \sigma^2_w I)$, using a uniform exploration policy, $q_{\mathcal J^{(T)}}(\mathbf{W}, \mathbf{b}, \mathbf{c}, d)$ will become equivalently optimal only after $T=\Omega(|\mathcal A|\delta^{-1})$ steps.
When $\delta=1$, $T=O(1)$ steps suffice to learn the equivalently optimal joint Q function, meaning the inefficiency of uniform exploration is due to a large set of suboptimal NEs.
}

\label{appendix:proof-uniform}

\begin{proof}

 Under uniform exploration, $$f_0=\frac{1}{U^2}, f_1=\frac{2m}{U^2}, f_2=\frac{m^2}{U^2}.$$ Criterion (\ref{lemma:criterion}) can be reformulated to
\begin{equation}
\label{proof:form1}
    \delta \ge \frac{(m^2-1)(2-\delta)}{ \left(1+\frac{\lambda}{U^2} \right)(m+1)^2}
\end{equation}
and thus with $\lambda = \frac{T\sigma_{w}^2}{\sigma_e^2},m\ge 2,\delta \le \frac{1}{6}$,
\begin{align*}
    T \ge & \frac{U^2 \sigma_e^2}{\sigma_w^2} \left( \frac{(m^2-1)(2-\delta)}{(m+1)^2} - 1 \right) \\
    \ge & \frac{U^2 \sigma_e^2}{\sigma_w^2} \left( \frac{3}{\delta} - \frac{6}{\delta} \right) \\
    = & \frac{U^2 \sigma_e^2}{6\sigma_w^2\delta}
\end{align*}

On the other hand, in non-penalty Climb Game where $\delta=1$, if at any time step $\exists (i,j)\ne(0,0)$ where the joint Q function $q_{\mathcal J^{(T)}}$ weighs action $(a_i,a_j)$ more than the action $(a_0,a_0)$, just swap the parameters related to $(i,j)$ with thos related to $(0,0)$ and the objective function $\mathcal J^{(T)}$ will be increased, which makes a contradiction. Hence, $T=1$ suffices for the non-penalty Climb Game.

\end{proof}

\subsection{Proof for Theorem \ref{theorem:epsilon} ($\epsilon$-greedy exploration)}

\textbf{Theorem \ref{theorem:epsilon}}
\emph{
Assume $\delta\le \frac{1}{32}, U \ge \max(4, \sigma_w\sigma_e^{-1})$. In the Climb Game $G_f(2,0,U)$, given the quadratic joint Q function form \\ $q(\mathbf{x}, \mathbf{y};\mathbf{W}, \mathbf{b}, \mathbf{c}, d)$ and a Gaussian prior $p(\mathbf{W})=\mathcal N(\mathbf{W};0, \sigma^2_w I)$, under $\epsilon$-greedy exploration with fixed $\epsilon \le \frac12$, $q_{\mathcal J^{(T)}}(\mathbf{W}, \mathbf{b}, \mathbf{c}, d)$ will become equivalently optimal only after $T=\Omega(|\mathcal A|\delta^{-1}\epsilon^{-1})$ steps. If $\epsilon(t)=1/t$, it requires $T=\exp \left( \Omega\left(|\mathcal A| \delta^{-1} \right) \right)$ exploration steps to be equivalently optimal.
}
\begin{proof}
\label{appendix:proof-epsilon}

Under the circumstances here, after the first step of uniform exploration, the sub-optimal policy will be used for $\epsilon$-greedy exploration. Then for both fixed $\epsilon$ or linearly decaying $\epsilon$, the following always holds:

\begin{align*}
    \frac{f_1}{f_0} =& 2m \\
    \frac{f_2}{f_0} \ge & \max(2, m^2) \\
    f_2 \ge & \min(1-\epsilon, m^2U^{-2}) \ge \frac12.
\end{align*}

Then it can be derived from criterion \ref{lemma:criterion} that

\begin{align}
\label{proof:form2}
    r\delta \ge& \left(\frac{f_2}{f_0}-1\right) \frac{m^2}{f_2\lambda +m^2} \frac{ r(2-\delta) }{ 1 + \frac{2f_2(f_1\lambda+2m)m}{f_1(f_2\lambda+m^2)} + \frac{f_2(f_0\lambda+1)m^2}{f_0(f_2\lambda+m^2)} } \notag \\
    =& 
    \left(\frac{f_2}{f_0}-1\right) \frac{m^2}{f_2\lambda +m^2} \frac{ r(2-\delta) }{ 1 +  \frac{f_2(f_0\lambda+1)(m^2 + 2m)}{f_0(f_2\lambda+m^2)} } \notag  \\
    =& 
    \left(\frac{f_2}{f_0}-1\right) \frac{m^2}{f_2\lambda +m^2} \frac{ r(2-\delta) }{ (m+1)^2 } \\
    \ge& (m^2 - 1) \frac{m^2}{\lambda +m^2} \frac{ r(2-\delta) }{ (m+1)^2 } \notag
\end{align}

Similar to inequality (\ref{proof:form1}), this yields to

\begin{align}
    \lambda \ge \frac{m^2}{6\delta}
\end{align}

Following inequality (\ref{proof:form2}), we further get
\begin{align*}
    r\delta \ge & 
    \left(\frac{f_2}{f_0}-1\right) \frac{m^2}{f_2\lambda +m^2} \frac{ r(2-\delta) }{ (m+1)^2 } \notag \\
    \ge & \frac{f_2}{2f_0} \frac{m^2}{\lambda + \lambda} \frac{r}{4m^2} \notag \\
    \ge &  \frac{r}{16f_0\lambda},
\end{align*}
which is
\begin{equation}
\label{proof:form3}
\lambda \ge \frac{\delta^{-1}}{16f_0}
\end{equation}

For fixed $\epsilon$, $$f_0 \le \frac{\epsilon}{U^2} + \frac{1}{TU^2} \le \frac{\epsilon}{U^2} + \lambda^{-1},$$ and further 
\begin{align*}
    &\lambda \ge \frac{\delta^{-1}}{16(\frac{\epsilon}{U^2} + \lambda^{-1})} \\
    \Rightarrow& \lambda \ge \frac{U^2}{\epsilon}\left(\frac{\delta^{-1}}{16} - 1\right) \ge \frac{U^2\delta^{-1}}{32\epsilon}
\end{align*}
which shows that $$T=\Theta(\lambda)=\Omega(U^2\delta^{-1}\epsilon^{-1}).$$

When $\epsilon=\frac{1}{T}$, $$f_0 \le \frac{1}{U^2}\frac{\sum_{t=1}^T \frac{1}{t}}{T} \le \frac{2\log(T)}{U^2T}$$ and further
\begin{align*}
    &\lambda \ge \frac{\delta^{-1}}{16\frac{2\log(T)}{U^2T}} \\
    \Rightarrow& 
    \log(T) \ge \frac{U^2\sigma_e^2}{32\sigma_w^2\delta}
\end{align*}
which shows that $$T=\exp(\Omega(U^2\delta^{-1})).$$

\end{proof}

\subsection{Proof for Theorem \ref{theorem:structured} (Structured exploration)}

\textbf{Theorem \ref{theorem:structured}}
\emph{
In the Climb Game $G_f(2,0,U)$, given the quadratic joint Q function form $q(\mathbf{x}, \mathbf{y};\mathbf{W}, \mathbf{b}, \mathbf{c}, d)$ and a Gaussian prior $p(\mathbf{W})=\mathcal N(\mathbf{W};0, \sigma^2_w I)$, under structured exploration $p_e^{(t)}(i,j)=U^{-1} \left[ \mathds{1}_{i=j} \right]$, $q_{\mathcal J^{(T)}}(\mathbf{W}, \mathbf{b}, \mathbf{c}, d)$ is equivalently optimal at step $T=O(1)$.
}

\begin{proof}
\label{appendix:proof-structure}

It is easy to verify that $\mathbf{W}=\mathbf{c}=0, \mathbf{b}=(1,0,....,0)^\top, d=0$ is the learned parameter that maximizes both the prior of $\mathbf{W}$ and the likelihood of prediction error at any step $T$. This parameter configuration directly gives the joint $Q$ function that is equivalently optimal.

\end{proof}

\subsection{Proof for Theorem \ref{theorem:exploration_generalize} (Exploration during Meta-Testing)}

\begin{definition}[$\epsilon$ Generalization]
Suppose there are training and testing data from the same space $\mathcal S.$ Let $g(x,y) \in \{0,1\}$ denote whether a exploration policy trained on training sample $x$ can learn to explore $y$ during testing time. And we always assume $g(x,y)=g(y,x)$. Then we say a exploration policy generalizes to explore $\epsilon$ nearby goals if for every training sample $x$, $\exists$ a neighbourhood $|\Omega(x)| \ge \epsilon$ of $x$ s.t. $\forall y\in \Omega(x)$, $g(x,y)=1$. Intuitively, that means the exploration policies learns to explore $\epsilon$ nearby region of every training sample.
\end{definition}

\textbf{Theorem \ref{theorem:exploration_generalize}}[Exploration during Meta-Testing]
\emph{
Consider goal-oriented tasks with goal space $\mathcal G \subseteq \mathcal S$. Assume the training and testing goals are sampled from the distribution $p(x)$ on $\mathcal G$, the dataset has $N$ i.i.d. goals sampled from a distribution $q(x)$ on $\mathcal S$. If the exploration policy generalizes to explore $\epsilon$ nearby goals for every training sample, we have that the testing goal is not explored with probability at most
\begin{equation}
P_{\text{fail}}\approx \int p(x)(1-\epsilon q(x))^N dx \le O \left( \frac{KL(p||q)+\mathcal H(p) }{\log (\epsilon N)} \right).
\end{equation}
Here we make the assumption that $\epsilon$ is small and $g$ is Lipschitz continuous.
}

\begin{proof}
For any testing goal $x$, every training sample $t\in \Omega(x)$ enables the exploration policy to explore $y$ during testing time. As $\Omega(x)$ is a neighborhood of $x$ and $g$ is Lipschitz continuous, we can select $\epsilon$ training samples $t$ from $\Omega(x)$ that is closest to $x$, and we think those samples $t$ has a similar sampling density function, i.e., $g(t)\approx g(x)$. Thus, with $N$ i.i.d samples, there is approximately $(1-\epsilon q(x))^N$ probability that testing goals $x$ will not be explored during the testing time.
Then we have
\begin{align*}
    P_{\text{fail}} \approx& \int p(x)(1-\epsilon q(x))^N dx \\
    \le& \int p(x)e^{-N\epsilon q(x)} dx \\
    \le& \int p(x) \frac{\log \frac{1}{q(x)} +2 q(x) + 16}{\frac12 \log (\epsilon N) } dx \\
    =& O \left( \frac{KL(p||q)+\mathcal H(p) }{\log (\epsilon N)} \right)
\end{align*}
\end{proof}

You may refer to the following lemma which is used in the above proof.

\begin{lemma}

$\forall k>16, x > 0$,
\begin{equation}
\label{lemma:main_inequal}
e^{-kx} \le \frac{ \log \frac{1}{x} }{\frac12 \log k} + \frac{x}{\frac12 \log k} + \frac{x}{k}
\end{equation}

\end{lemma}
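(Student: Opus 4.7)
\smallskip
\noindent\textbf{Proof proposal.} The plan is to prove the inequality by a two-case split at the natural transition point $x_0 = 1/\sqrt{k}$, which is chosen so that at this boundary both the exponential $e^{-kx}$ equals $e^{-\sqrt{k}}$ and the term $\log(1/x)/(\tfrac12\log k)$ equals exactly $1$. Writing $L = \tfrac12 \log k$ for brevity, the right-hand side of the target inequality is $\log(1/x)/L + x/L + x/k$, and the two additive groups $\log(1/x)/L$ (which dominates for small $x$) and $x/L + x/k$ (which dominates for moderate and large $x$) each cover one case cleanly.

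First I would handle the small-$x$ case $0 < x \le 1/\sqrt k$. Here $\log(1/x) \ge \tfrac12 \log k = L$, so the first summand of the RHS is already at least $1$, while the LHS $e^{-kx}\le 1$. That disposes of this case using only the first term of the RHS, independently of $k > 16$.

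The substantive case is $x > 1/\sqrt k$, where $kx > \sqrt k$ and therefore $e^{-kx} < e^{-\sqrt k}$. I would bound the RHS from below by dropping the $x/k$ term and combining the other two into $\big(x + \log(1/x)\big)/L = (x - \log x)/L$. The elementary inequality $x - \log x \ge 1$ for all $x>0$ (with equality at $x=1$, from $\log x \le x-1$) then gives RHS $\ge 1/L = 2/\log k$. So it remains to verify the purely numerical inequality
\begin{equation}
e^{-\sqrt k} \;\le\; \frac{2}{\log k}, \qquad k > 16,
\end{equation}
equivalently $e^{\sqrt k} \ge \tfrac12 \log k$. At the endpoint $k=16$ this reads $e^4 \approx 54.6 \ge \log 4 \approx 1.39$, with enormous slack. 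For monotonicity beyond $k=16$ one can check that the gap $\sqrt k - \log(\tfrac12\log k)$ is increasing in $k$ for $k\ge 16$ (its derivative $\tfrac{1}{2\sqrt k} - \tfrac{1}{k\log k}$ is nonnegative whenever $\sqrt k \log k \ge 2$, which holds on $k\ge 16$), so the inequality propagates to all $k>16$.

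The only delicate point, and the reason for the hypothesis $k>16$, is the last numerical check: one needs enough room for the exponential decay $e^{-\sqrt k}$ to beat the slowly shrinking $2/\log k$, and $k=16$ is essentially the smallest threshold at which the chosen case split with $x_0=1/\sqrt k$ yields both halves with a clean constant. Any weakening of the hypothesis, or a different split point such as $x_0 = c\log k / k$, would force a more intricate argument, so the choice $x_0 = 1/\sqrt k$ is what keeps the proof to these two short cases.
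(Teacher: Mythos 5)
Your proof is correct, and it takes a genuinely different route from the paper's. The paper splits at $x_0=\frac{2\log k}{k}$ and assigns each term of the right-hand side its own job: it shows $\frac{x}{k}\ge e^{-kx}$ for all $x\ge x_0$ (via $\frac{x_0}{k}=\frac{2\log k}{k^2}\ge k^{-2}=e^{-kx_0}$ plus monotonicity), shows $\frac{\log(1/x)}{\frac12\log k}\ge e^{-kx}$ on $(0,x_0]$, and uses $x+\log\frac1x\ge 0$ to keep the unused terms nonnegative when $\log\frac1x<0$. The costly step there is the middle region: the paper analyzes $f(x)=\frac{\log(1/x)}{\frac12\log k}-e^{-kx}$, locates its critical points $x_1<1/k<x_2$ as the roots of $kxe^{-kx}=2/\log k$, bounds $x_1<\frac{4}{k\log k}$, and verifies $f(x_1)\ge 0$ and $f(x_0)\ge 0$. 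Your split at $1/\sqrt{k}$ sidesteps that calculus entirely: for $x\le 1/\sqrt{k}$ the log term alone is at least $1\ge e^{-kx}$, and for $x>1/\sqrt{k}$ you merge the two $\frac{1}{\frac12\log k}$-terms through the sharper elementary bound $x-\log x\ge 1$ (the paper only invokes $x-\log x\ge 0$), reducing the whole case to the single numerical inequality $e^{-\sqrt{k}}\le \frac{2}{\log k}$, whose endpoint check and monotonicity in $k$ you carry out correctly. Your route buys brevity and a marginally stronger conclusion, since the $\frac{x}{k}$ term is never used: you have in effect proved $e^{-kx}\le \frac{x+\log(1/x)}{\frac12\log k}$ for $k>16$, whereas the paper's term-by-term decomposition genuinely consumes all three summands. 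One inaccuracy in your closing commentary only: $k>16$ is not the threshold at which your case split starts working --- your final check $e^{\sqrt{k}}\ge \frac12\log k$ holds with enormous slack for every $k>1$ (indeed $\frac{2}{\log k}\ge 1>e^{-\sqrt{k}}$ already for $k\le e^{2}$), so your argument covers a wider range; the constant $16$ is inherited from the statement, where it is actually needed in the paper's own argument (for instance its estimate $\log\frac{k}{2\log k}\ge \log 2$ requires $k\ge 4\log k$, which fails near $k=8$), not in yours.
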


\begin{proof}

Let $x_0 = \frac{2\log k}{k}$. We can prove inequality~\ref{lemma:main_inequal} by proving the following three conditions.

\begin{equation}\label{lemma_inequal:condition1}
1.~~\forall x \ge x_0, \frac{x}{k} \ge e^{-kx}
\end{equation}

\begin{equation}\label{lemma_inequal:condition2}
2.~~\forall 0\le x\le x_0, \frac{ \log \frac{1}{x} }{\frac12 \log k} \ge e^{-kx}
\end{equation}

\begin{equation}\label{lemma_inequal:condition3}
3.~~\forall x>0, x + \log \frac{1}{x} \ge 0.
\end{equation}

To prove (\ref{lemma_inequal:condition1}), it suffices to show $$\frac{x_0}{k} = \frac{2\log k}{k^2} \ge \frac{1}{k^2} = e^{-kx_0},$$ as $\frac{x}{k}$ is monotone increasing and $e^{-kx}$ is monotone decreasing.

Now we prove (\ref{lemma_inequal:condition2}). Let
$$
f(x) = \frac{\log \frac{1}{x}}{\frac12 \log k} - e^{-kx}.
$$

Since
$$
xf'(x) = kxe^{-kx} - \frac{2}{\log k}
$$
is increasing for $x\in (0,1/k)$ and decreasing for $x\in (1/k, +\infty)$, $\exists 0<x_1<1/k<x_2$ s.t. $f'(x)>0 \Leftrightarrow x_1 \le x \le x_2$. Here $x_1,x_2$ are two solutions of $kxe^{-kx} - 2 / \log k = 0$.

Therefore, to prove (\ref{lemma_inequal:condition2}), it suffices to show $f(x_1)\ge 0$ and $f(x_0) \ge 0$, and the later one can be verified as
\begin{align*}
f(x_0) =& \frac{\log \frac{k}{2\log k}}{\frac12 \log k} - .\frac{1}{k^2} \\
\ge& \frac{\log 2}{\frac12 \log k} - \frac{1}{k^2} \\
\ge& 0
\end{align*}

Since $x_1 < 1/k$, we have
$$
\frac{2}{\log k} = kx_1e^{-kx_1} > \frac{kx_1}{e} \Rightarrow x_1 < \frac12.
$$

Thus,
$$
\frac{2}{\log k} = kx_1e^{-kx_1} > kx_1(1-kx_1) > \frac{kx_1}{2} \Rightarrow x_1 < \frac{4}{k\log k}
$$
and
\begin{align*}
f(x_1) =&  \frac{\log \frac{1}{x_1}}{\frac12 \log k} - e^{-kx_1} \\
\ge& \frac{\log \frac{k\log k}{4} }{ \frac12 \log k} - 1 \\
=& \frac{\log k + \log\log k - \log 4}{\frac12 \log k} - 1 \\
\ge& 0.
\end{align*}

Finally, (\ref{lemma_inequal:condition3}) is direct from the fact $e^x\ge x$.
\end{proof}

\section{Additional Experiment Detail}
\subsection{Environment Settings}
\label{detail_env}

\subsubsection{Climb Game Variants}

(i) \textbf{One-step climb game}. A one-time climb game $G(n, k, u, U)$ is a $n$-player matrix game where every player has $U$ actions to choose from. The reward is determined by the number of players who choose action $u$, which can be defined as
\begin{align*}
    R(\boldsymbol{a}) = 
    \begin{cases}
        1,&\text{if \#}u = k,\\
        1 - \delta, &\text{if \#}u = 0,\\
        0,& \text{otherwise}.
    \end{cases}
\end{align*}
where $\delta\in(0,1)$.

The task space $\mathcal{T}^{\text{one}}_U$ contains all $2$-player one-step climb games with $U$ actions for each player, i.e., $\mathcal{T}^{\text{one}}_U = \{G(2,k,u,U)\mid 1\le k\le n, 0\le u < U \}$.

(ii) \textbf{Multi-stage climb game}. A multi-stage climb game \\ $\hat{G}(S, n, [(k_t,u_t)]_{t=1}^S, U)$ is an $S$-stage game, where each stage $t$ itself is a one-step climb game $G(n,k_t,u_t,U)$. At stage $t$, agent $i$ is given the observation $\mathcal{O}_i^t = [t, h_{t-1}]$, where $h_{t-1}$ is the history of the joint actions.

The task space $\mathcal{T}^{\text{multi}}_{S,U}$ consists of all $2$-player multi-stage climb game with $S$ stages and $U$ actions, i.e., $\mathcal{T}^{\text{multi}}_{S,U} = \hat{G}(S, 2, [(k_t,u_t)]_{t=1}^S,U) \mid \forall t\ 1\le k_t\le n, 0\le u_t< U\}$. We choose $\mathcal{T}^{\text{multi}}_{5,10}$ ($10$-Multi) for the experiments.

In all experiments $\delta$ is set to $\frac12$. We use $\mathcal{T}^{\text{one}}_{10}$ and $\mathcal{T}^{\text{multi}}_{5,10}$ in our experiments.The task distribution $p(\mathcal T)$ is uniform over the task space. Ten training tasks are sampled from the task distribution, and three testing tasks that are different from the training tasks are chosen to evaluate the performance.

\subsubsection{MPE Domain} \label{detail_mpe}

In a MPE Climb Game $\bar{G}(n,k,u,U,L)$, there are $U$ non-overlapping landmarks on the map with positions $\{L_j\}_{j=0}^{U-1}$. We assume a distribution $L\sim \Psi^U$ from which the landmark positions $L$. The reward is determined by the number of agents locating on the $u$-th landmark. More formally, suppose $f_j(s)$ is the number of agents locating on the $j$-th landmark, the reward can be defined as
\begin{align*}
    R(s, \boldsymbol{a}) = 
    \begin{cases}
        1,&\text{if } f_u(s)=k\text{ and }\sum_{j=0}^{U-1}f_j(s)=n,\\
        1 - \delta, &\text{if } f_u(s)=0\text{ and }\sum_{j=0}^{U-1}f_j(s)=n,\\
        0,& \text{otherwise}.
    \end{cases}
\end{align*}
The observation of agent $i$ contains the relative positions of all landmarks and other agents. As before, $u$ and $k$ will not be given in the observation and can only be inferred from the received reward.
A task space $\mathcal{T}^{\text{MPE}}_{n,U}=\{\bar{G}(n,k,u,U,L) \mid k=n,0\le 0<U, L\sim \Psi^U\}$ consists of all MPE climb games with $n$ players and $U$ landmarks, and is fully cooperative by setting $k=n$.

We evaluate MESA on the $2$-agent tasks ($\mathcal{T}^{\text{MPE}}_{2,5}$ and $\mathcal{T}^{\text{MPE}}_{2,6}$) and $3$-agent tasks ($\mathcal{T}^{\text{MPE}}_{3,5}$ and $\mathcal{T}^{\text{MPE}}_{3,6}$) while fixing $k=2$. The task distribution $p(\mathcal T)$ is defined by the probability density function $p(\bar{G}(n,k,u,U,L))=U^{-1}\Psi^U(L)$.

We set $\delta=\frac12$. The environments are different in landmark positions, and the tasks are different in target landmarks. 

\subsubsection{Multi-agent MuJoCo Domain}
In the multi-agent MuJoCo Swimmer environment, we similarly define a climb game $G(\alpha)$ where there are $2$ agents and the angular states of the two joints determine the current action of the two agents. Specifically, if state $s$ corresponds to the two joints forming angles of $\alpha_1$ and $\alpha_2$, then the reward can be defined as:
\begin{align*}
    R(s, \boldsymbol{a}) = 
    \begin{cases}
        1,&\text{if } |\alpha_1 - \alpha| < \epsilon, |\alpha_2 - \alpha| < \epsilon\\
        1 - \delta, &\text{if } |\alpha_1 - \alpha| > \epsilon, |\alpha_2 - \alpha| > \epsilon\\
        0,& \text{otherwise},
    \end{cases}
\end{align*}
where $\epsilon$ is a very small angle. The task space consists of target angles between $-30$ degrees and $30$ degrees, i.e., $\mathcal{T} = \{\alpha\mid -30^{\circ} < \alpha < 30^{\circ} \}$. We also set $\delta=\frac12$.

\subsection{Hyperparameter and Computation Settings} \label{hyper_sec}
The hyperparameters are detailed in Table \ref{table:hyperparam}. All tasks are sampled uniformly at random from the task space detailed in Section \ref{experiments} and then divided into the training and testing tasks. We use different tasks for the meta-training stage, which includes the high-reward dataset collection and the training of the exploration policies. We evaluate the meta-trained exploration policies on novel meta-testing tasks over $3$ runs with different seeds, each consisting of a different set of meta-testing tasks. Computation is done on a 32-core CPU with 256 GB RAM and an NVIDIA GeForce RTX 3090.

\begin{table*} [t]
\centering
\begin{tabular}{ll}
\hline
\textbf{Hyperparameter} & \textbf{Value} \\
\hline
off-policy MARL Algorithm $f$ & MADDPG \\
\hline
Meta-training steps & \begin{tabular}[c]{@{}l@{}} 50k in One-step Climb Game\\ 100k in Multi-stage Climb Game \\ 3M in MPE \\
3M in MA-MuJoCo
\end{tabular} \\
\hline
\begin{tabular}[c]{@{}l@{}} High-reward data collection steps
\end{tabular} & \begin{tabular}[c]{@{}l@{}} 30K in Climb Game\\ 500k in MPE \\
1M in MA-MuJoCo
\end{tabular} \\
\hline
Meta-training task size & 
\begin{tabular}[c]{@{}l@{}} $10$ in Climb Game \\
$30$ in MPE and MA-MuJoCo Swimmer
\end{tabular} \\
\hline
Meta-testing steps & \begin{tabular}[c]{@{}l@{}} 50k in One-step Climb Game\\ 100k in Multi-stage Climb Game \\ 3M/6M in MPE \\
2.5M in MA-MuJoCo Swimmer
\end{tabular} \\
\hline
Meta-testing task size & \begin{tabular}[c]{@{}l@{}} $5$ in Climb Game \\
$15, 18$ in $5$-agent MPE and $6$-agent MPE\\
$6$ in MA-MuJoCo Swimmer
\end{tabular} \\
\hline
Random exploration      & \begin{tabular}[c]{@{}l@{}} 3000 steps in Climb Game \\ 50K steps in MPE and MA-MuJoCo \end{tabular} \\
\hline
Network architecture   & \begin{tabular}[c]{@{}l@{}}Recurrent Neural Network\\ (one GRU layer with 64 hidden units) \end{tabular}   \\
\hline
Threshold $R^\star$   & 1 (sparse-reward tasks)   \\
\hline
Relabel $\gamma$   & 0.05  \\
\hline
Decreasing function $f_d$ & $1/x^5$ \\
\hline
Distance Metric $\Vert\cdot\Vert_\mathcal{F}$ & L2 norm \\
\hline
Number of Exploration Policies & \begin{tabular}[c]{@{}l@{}} 4 in Climb Game \\ 2/4 in MPE \\ 2 in MA-MuJoCo \end{tabular} \\ 
\hline
Learning rate & 5e-3/1e-4 (Adam optimizer) \\
\hline
Batch size & \begin{tabular}[c]{@{}l@{}} 32 trajectories in Climb Game \\ 300 trajectories in MPE \\ 8 trajectories in MA-MuJoCo \end{tabular} \\
\hline
\end{tabular}
\caption{Hyperparameters used in MESA}
\label{table:hyperparam}
\end{table*}

\subsection{Baseline Methods} \label{appendix:baseline}

For MAPPO \cite{MAPPO} and Random Network Distillation \cite{RND}, we use the released codebase\footnote{https://github.com/marlbenchmark/on-policy}.

For QMIX \cite{QMIX} and MAVEN \cite{MAVEN}, we use the released codebase\footnote{https://github.com/AnujMahajanOxf/MAVEN}. In MA-MuJoCo environments, we discretize the action into $11$ even points for QMIX.

For MAESN \cite{MAESN}, we modify the single-agent version to a multi-agent version by treating agents as independently optimizing their own returns without consideration of the other agents.

For EMC~\cite{peng2021facmac}, we use the released codebase\footnote{https://github.com/kikojay/EMC}.

For meta-training-based methods, we pretrain a policy (conditioned or unconditioned) with the same configuration of MESA (number of tasks, number of training steps) and then deploy it on the meta-testing task. The goal-conditioned policy takes the information of the task goal (key action for climb game, key landmark id for MPE Domain, key angles for MA-MuJoCo) as additional observation. We adapt C-BET \cite{parisi2021interestingdeepak} to multi-agent based on MAPPO \cite{MAPPO}.

\subsection{Results On All Environments}
\begin{table*}[t]
\centering
\begin{tabular}{|l|l|l|l|l|l|l|l|}
    \hline
    \multicolumn{1}{|l|}{}     & \multicolumn{1}{l|}{\begin{tabular}[c]{@{}l@{}}One-step\\ Climb Game\end{tabular}} & \multicolumn{1}{l|}{\begin{tabular}[c]{@{}l@{}}Multi-stage\\ Climb Game\end{tabular}} & \multicolumn{1}{l|}{\begin{tabular}[c]{@{}l@{}}2A5L MPE\end{tabular}} & \multicolumn{1}{l|}{\begin{tabular}[c]{@{}l@{}}2A6L MPE\end{tabular}} & \multicolumn{1}{l|}{\begin{tabular}[c]{@{}l@{}}3A5L MPE\end{tabular}} & \multicolumn{1}{l|}{\begin{tabular}[c]{@{}l@{}}3A6L MPE\end{tabular}} & \multicolumn{1}{l|}{\begin{tabular}[c]{@{}l@{}}MA-MuJoCo\\ Swimmer\end{tabular}} \\ \hline
    \multicolumn{1}{|l|}{MESA} & \multicolumn{1}{l|}{\textbf{0.83 $\pm$ 0.20}}                                                  & \multicolumn{1}{l|}{\textbf{0.81 $\pm$ 0.06}}                                                     & \multicolumn{1}{l|}{\textbf{61.32 $\pm$ 8.24}}                                                        & \multicolumn{1}{l|}{\textbf{58.73 $\pm$ 10.16}}                                                       & \multicolumn{1}{l|}{\textbf{51.83 $\pm$ 13.37}}                                                       & \multicolumn{1}{l|}{\textbf{44.71 $\pm$ 15.92}}                                                       & \multicolumn{1}{l|}{\textbf{599.32 $\pm$ 35.93}}                                             \\ \hline
    MADDPG                     & 0.74 $\pm$ 0.25                                                                       & 0.50 $\pm$ 0.00                                                                          & 21.09 $\pm$ 23.07                                                                            & 19.44 $\pm$ 18.41                                                                            & 5.45 $\pm$ 6.49                                                                              & 3.16 $\pm$ 5.73                                                                              & 499.33 $\pm$ 0.52                                                                   \\ \hline
    MAPPO                      & 0.50 $\pm$ 0.00                                                                       & 0.54 $\pm$ 0.05                                                                          & 24.42 $\pm$ 2.99                                                                             & 27.57 $\pm$ 4.86                                                                             & 10.12 $\pm$ 5.83                                                                             & 13.52 $\pm$ 2.91                                                                             & 496.88 $\pm$ 1.98                                                                   \\ \hline
    MAPPO-RND                  & 0.50 $\pm$ 0.00                                                                       & 0.53 $\pm$ 0.04                                                                          & 24.05 $\pm$ 2.45                                                                             & 27.10 $\pm$ 5.67                                                                             & 17.18 $\pm$ 3.52                                                                             & 7.30 $\pm$ 2.76                                                                              & 496.36 $\pm$ 1.76                                                                   \\ \hline
    QMIX                       & 0.50 $\pm$ 0.00                                                                       & 0.50 $\pm$ 0.00                                                                          & 0.21 $\pm$ 0.04                                                                              & 0.48 $\pm$ 0.12                                                                              & 0.06 $\pm$ 0.02                                                                              & 0.04 $\pm$ 0.03                                                                              & 499.97 $\pm$ 0.03                                                                   \\ \hline
    MAVEN                      & 0.50 $\pm$ 0.00                                                                       & 0.50 $\pm$ 0.00                                                                          & 0.06 $\pm$ 0.03                                                                              & 0.13 $\pm$ 0.10                                                                              & 0.06 $\pm$ 0.05                                                                              & 0.07 $\pm$ 0.00                                                                              & 495.41 $\pm$ 2.63                                                                   \\ \hline
    EMC                        & 0.50 $\pm$ 0.00                                                                       & 0.50 $\pm$ 0.00                                                                          & 50.49 $\pm$ 17.26                                                                            & 50.61 $\pm$ 17.41                                                                            & 36.63 $\pm$ 0.11                                                                             & 36.47 $\pm$ 0.28                                                                             & 499.66 $\pm$ 0.20                                                                   \\ \hline
    Pretrain                   & 0.55 $\pm$ 0.00                                                                       & 0.55 $\pm$ 0.00                                                                          & 23.26 $\pm$ 2.19                                                                             & 29.81 $\pm$ 4.97                                                                             & 17.91 $\pm$ 2.63                                                                             & 17.72 $\pm$ 5.28                                                                             & 496.49 $\pm$ 1.33                                                                   \\ \hline
    CBET                       & 0.55 $\pm$ 0.00                                                                       & 0.55 $\pm$ 0.00                                                                          & 23.69 $\pm$ 1.01                                                                             & 28.37 $\pm$ 3.59                                                                             & 19.85 $\pm$ 5.05                                                                             & 19.80 $\pm$ 4.36                                                                             & 497.54 $\pm$ 1.49                                                                   \\ \hline
    Goal                       & 0.55 $\pm$ 0.00                                                                       & 0.55 $\pm$ 0.00                                                                          & 31.61 $\pm$ 0.90                                                                             & 32.14 $\pm$ 0.10                                                                             & 30.60 $\pm$ 1.03                                                                             & 30.44 $\pm$ 0.60                                                                             & 498.13 $\pm$ 0.45                                                                   \\ \hline
    \end{tabular}
\caption{Summary of final performance for all the algorithms evaluated on all the environments. Numbers in bold indicate the best performing in a particular environment. For simplicity, 2A5L MPE stands for $2$-agent MPE with 5 landmarks and the same for other MPE results.}
\label{table:final_results}
\end{table*}

Table \ref{table:final_results} gives the final performance for each algorithm in all environments. We observe that our proposed MESA outperforms all other baseline methods across all environments.

\section{Visualization of Learned Exploration Policies} \label{appendix:visualization}

We visualize two exploration policies in the 2-agent 3-landmark MPE Climb Game tasks. Both exploration policies are shown in Figure \ref{fig:mode_two}.  In addition, the learned policy visited the three landmarks within $20$ timesteps, less than a third of the length of the trajectory, which showcases its ability to quickly cover the collected promising subspace. Both policies successfully visited all three landmarks consecutively and within only $1/3$ of the episode length. 

\begin{figure}[htb]
    \centering
    \includegraphics[width=.45\textwidth]{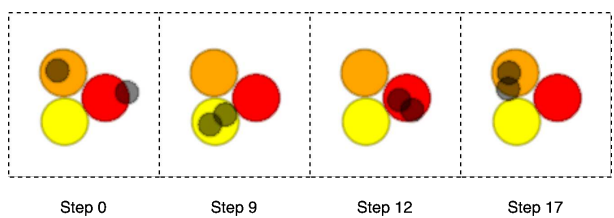}
    \caption{Visualization of two learned exploration policies in 2-agent 3-landmark MPE tasks. The figure shows the critical steps in the trajectory where agents coordinately reach high-rewarding states that were previously collected.}
    \label{fig:mode_two}
\end{figure}

\section{Ablation studies}
\label{appendix:ablation}

\begin{figure}
    \centering
    \includegraphics[width=.45\textwidth]{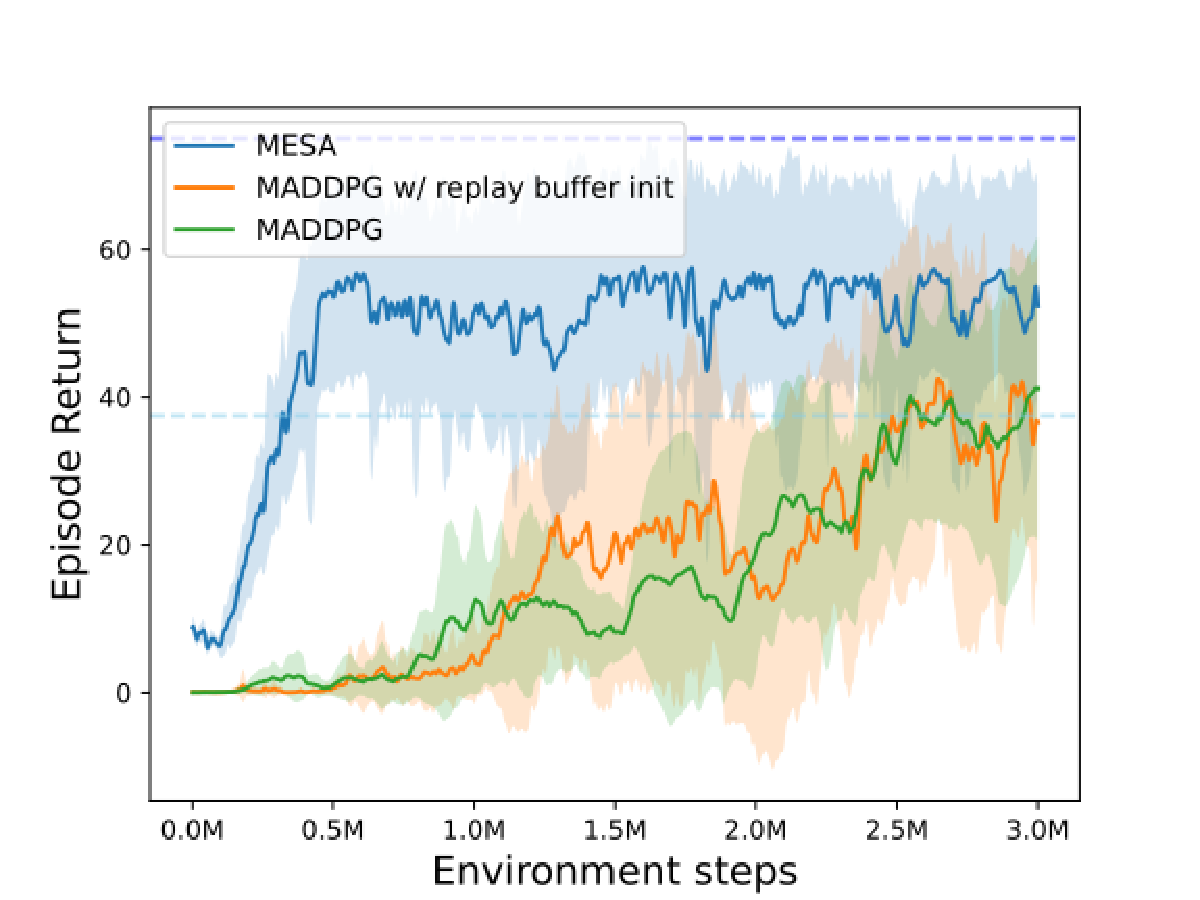}%
     \caption{Ablation study. Comparing our approach with 1) vanilla MADDPG without using the high-rewarding state-action dataset $\mathcal{M}_*$ or the trained exploration policies and 2) initializing the replay buffer with the $\mathcal{M}_*$ but not using the exploration policies, we show that both components contribute to overall performance.}
     \label{fig:ablation}
\end{figure}
To figure out the extent to which the high-rewarding state-action dataset $\mathcal{M}_*$ and the trained exploration policies contribute to the overall better performance, we perform an ablation study on the MPE domain.%

We observe in Figure \ref{fig:ablation} that by initializing the buffer with $\mathcal{M}_*$, the training process is accelerated (from $1$M steps to $2$M steps). However, even though $\mathcal{M}_*$ contains the collected high-reward states, directly learning the intrinsic structure and generalizing to unseen meta-testing tasks is nontrivial. Hence, MADDPG with $\mathcal{M}_*$ initialization still fails to learn the optimal NE.

But when assisted with the trained exploration policies, the algorithm is able to find the global optimum while also training faster than the vanilla MADDPG. The greater variance of the ablated method also suggests that $\mathcal{M}_*$ contains useful but subtle information for learning, and the exploration policies help with extracting that information.

\end{document}